\documentclass[conference]{IEEEtran}
\IEEEoverridecommandlockouts
\usepackage{cite}
\usepackage{amsmath,amssymb,amsfonts,amsthm}
\usepackage{graphicx}
\usepackage{textcomp}
\usepackage{xcolor}
\def\BibTeX{{\rm B\kern-.05em{\sc i\kern-.025em b}\kern-.08em
    T\kern-.1667em\lower.7ex\hbox{E}\kern-.125emX}}

\usepackage{url}
\usepackage[hidelinks]{hyperref}
\usepackage[utf8]{inputenc}
\usepackage[small]{caption}
\usepackage{subcaption}
\usepackage{enumitem}
\usepackage{booktabs}
\usepackage{algorithm}
\usepackage{algpseudocode}
\urlstyle{same}
\usepackage{multirow}
\newtheorem{definition}{Definition}[section]
\newtheorem{theorem}{Theorem}[section]

\newtheorem{proposition}[theorem]{Proposition}

\begin{document}

\title{Fractional SDE-Net: Generation of Time Series Data with Long-term Memory}
\author{\IEEEauthorblockN{1\textsuperscript{st} Kohei Hayashi}
\IEEEauthorblockA{\textit{Graduate School of Mathematical Science} \\
\textit{The University of Tokyo}\\
Tokyo, Japan \\
kohei@ms.u-tokyo.ac.jp}
\and
\IEEEauthorblockN{2\textsuperscript{nd} Kei Nakagawa}
\IEEEauthorblockA{\textit{Innovation Lab} \\
\textit{Nomura Asset Management Co.,}\\
Tokyo, Japan \\
kei.nak.0315@gmail.com}
}

\maketitle

\begin{abstract}
In this paper, we focus on the generation of time-series data using neural networks. 
It is often the case that input time-series data have only one realized (and usually irregularly sampled) path, which makes it difficult to extract time-series characteristics, and its noise structure is more complicated than i.i.d. type. 
Time series data, especially from hydrology, telecommunications, economics, and finance, exhibit long-term memory also called long-range dependency (LRD).
The main purpose of this paper is to artificially generate time series with the help of neural networks, making the LRD of paths into account.
We propose fSDE-Net: neural fractional Stochastic Differential Equation Network. 
It generalizes the neural stochastic differential equation model by using fractional Brownian motion with a Hurst index larger than half, which exhibits the LRD property. 
We derive the solver of fSDE-Net and theoretically analyze the existence and uniqueness of the solution to fSDE-Net. Our experiments with artificial and real time-series data demonstrate that the fSDE-Net model can replicate distributional properties well.
\end{abstract}

\begin{IEEEkeywords}
Neural SDE-Net, Neural Fractional SDE-Net,  Long-term Memory, Long Range Dependency
\end{IEEEkeywords}

\section{Introduction}
Time series data appears in various areas and its modeling, which enables us to understand more about phenomena behind time evolution or to describe forthcoming scenarios is a fundamental issue~\cite{shumway2000time}. 
Recently, generative models specializing in time series data using deep neural networks (DNNs) are gathering attention. 
As to learning of population distribution, generative adversarial network (GAN) advocated in \cite{goodfellow2014generative} is a basic approach in spite of the great success of GANs with images \cite{gui2021review}. Their ability to generate and manipulate data across multiple domains has contributed to their success.
GAN is also used for generating time series data.
For instance, beginning from models using recurrent neural networks (RNN-GAN\cite{mogren2016c}, \cite{esteban2017real}), TimeGAN reflecting time series structure~\cite{yoon2019time}, QuantGAN focusing on financial time series such as stock price or exchange rate~\cite{wiese2020quant}, SigGAN using signature as a characteristic feature of time-series paths~(\cite{ni2020conditional},\cite{ni2021sig}).

In general, time series data is a series of data points indexed in time order and taken at successive equally spaced points in time. However, it is often the case where input time series data is irregularly sampled. 
Indeed, financial data is sampled only on weekdays though markets are ongoing also when stock exchanges are closed, for instance~\cite{nakagawa2020ric}. 
If sampling of input data is irregular, it is inadequate to learn the dynamics with time-homogeneous systems even when the models have a nutritious structure. As an approach that overcomes such a difficulty, neural ordinary differential equations~(ODE) which combine DNNs with numerical solver of differential equations broaden a new stream for generation of time-series data. 
\cite{chen2018neural} pointed out that ResNet~\cite{he2016deep} can be interpreted as a system of ODE by taking some continuation procedure. 
As an application, \cite{rubanova2019latent} used the neural ODE to generate irregular sample time series. If neural ODE is used instead of deep ResNet, then the number of parameters is much reduced, which saves a great deal of memory. Moreover, one can utilize the adjoint method~\cite{pontryagin1987mathematical} when computing gradient of a loss function, which enables an efficient calculation by solving another differential equations.

After proposal of the neural ODE concept, several papers introduced random effect, making use of stochastic differential equation (SDE) driven by the standard Brownian motion $B_t$ of the form
\begin{equation}
\label{eq:sde}
dX_t = b_{\theta_1} (t, X_t) dt + \sigma_{\theta_2} (t, X_t) dB_t , 
\end{equation}
with some initial condition. 
SDE has already seen widespread use for modeling real-world random phenomena~\cite{karatzas2012brownian,revuz2013continuous}.
Here drift and diffusion functions $b_{\theta_1}$ and $\sigma_{\theta_2}$ are given by some neural networks and $(\theta_1, \theta_2 )$ are parameters of these networks. 

For modeling randomness of observed data, standard Brownian motion is a natural choice if external noise is added in an independent and identically distributed~(i.i.d.) manner. 
However, some real time-series data has a more complex noise structure than the standard Brownian motion, for instance, when increments of noise process have a correlation in time. In addition, also when deriving a neural SDE from discrete DNNs by taking an infinite depth limit, the noise term is not necessarily limited to the standard Brownian motion. Indeed, even if the noise effect is imposed in an i.i.d. manner by dropout or random initialization, for instance, the random variables may have correlation across layers as learning progresses or some additional batch-originated randomness is implemented.

From this point of view, \textit{fractional Brownian motion}\cite{mandelbrot1968fractional} is a good candidate to model the noise effect in a more realistic way. 
Fractional Brownian motion (fBm) is a continuous stochastic process parameterized by a real number called Hurst index, say $H$, which takes a value between zero and one. It is known that fBm matches the standard Brownian motion when $H=1/2$ so that fBm is a generalization of the standard one. Moreover, as the Hurst index becomes large, fBm exhibits long-term memory also called long-range dependency (LRD) in time and has better regularity of sample paths~(see Fig \ref{fig:fbm_paths}.).

The fBm has played an increasingly important role in many fields of application such as hydrology, telecommunications, economics and finance~\cite{biagini2008stochastic,banna2019fractional}. 
Particularly in the realm of mathematical finance, several models using fBm are proposed and they show remarkable success as an approach to describe real markets appropriately~\cite{rostek2013note}. 
First, \cite{greene1977long} pointed out that there exists long-range dependence in common stock returns, while it is reported that volatility of markets is rough when observed in short term. 
Due to the aforementioned properties of fBm, these two perspectives are reproduced by fBm with Hurst index $H>1/2$ and $H<1/2$, respectively. On the other hand, the existence of arbitrage opportunity is often proved for models using fBm, albeit it is also shown that a cost for transaction makes the arbitrage trade impossible which seems to be a more natural description. These stylized facts about time series in real financial markets motivate us to use fBm as driving noise.

In this paper, we generalize the existing neural SDE to the fractional version, especially focusing on the case where $H>1/2$, to reflect the LRD of time series. 
To generate time series with such a property, we propose fSDE-Net: neural fractional Stochastic Differential Equation Network.
Our proposal, fSDE-Net, generalizes the noise modeled by a standard Brownian motion to fBm $B^H$:
\[
dX_t = b_{\theta_1} (t, X_t) dt + \sigma_{\theta_2} (t, X_t)  dB^H_t.
\]
It generalizes the neural SDE model by using fractional Brownian motion with Hurst index larger than half, which exhibits LRD property. 
The main purpose of this paper is to artificially generate time series with the help of neural networks, making the LRD property of paths into account.

Finally, our contributions are as follows:
\begin{itemize}
\item
We propose fSDE-Net by extending SDE-Net using fBm, based on theoretical results on the existence and uniqueness of the solution to fSDEs.

\item
We construct the numerical scheme for the generator of our method, neural differential equation driven by fBm, and prove convergence of discretized solutions by the classical Euler scheme.

\item
As a consequence of the theoretical analysis, we give a criterion to the choice of activation function of driving neural networks, to make the continuous generative model realizable. 

\item
As an application of the fSDE-Net, we construct a generative model for artificial and real time-series data. 
Our experiments demonstrate that calibrated generator of the model can replicate distributional properties of the original time series, especially LRD.
Our code is available at  \url{https://github.com/xxx}.
\end{itemize}

\subsection{Organization of this paper} 
First we summarizes related works in Section~\ref{sec_rw}.
Then background of this study, including the descriptions of fBm, R/S statistics and SDE driven by fBm is explained in Section~\ref{sec_Prelim}. 
After that, the fSDE-Net we propose in this paper and related discussions are described in Section~\ref{sec_fSDE}.
Next, the numerical scheme and implementation of fSDE-Net are explained in Section~\ref{sec_generative} and then numerical experiments on synthetic and real world datasets are conducted in Section~\ref{sec_exp}.
The numerical experiments are conducted in a rather simple setting to check first of all whether the implementation is possible. 
Finally we give the conclusion of this paper in Section~\ref{sec_con}. 

\section{Related Works}\label{sec_rw}
Broadly speaking, existing approaches to extend neural ODE can be categorized into two lines.
The first use SDEs based on Brownian motion as a way to insert noise into a system and the second instead consider  other process such as jump and controlled differential equations. Our fSDE-Net directly extend the first approach using fBm which generalizes Brownian motion.

\subsection*{Neural Stochastic Differential Equation Models}
There are two points of view to consider SDE to extend neural ODE. 
The first motivation is to take the randomness of input data itself into account. 
In \cite{tzen2019theoretical} and \cite{tzen2019neural}, SDE is introduced as a continuous extension of the deep latent Gaussian model whose discrete version is originally introduced in \cite{rezende2014stochastic}. 
Through the lens of stochastic control, \cite{tzen2019theoretical} provided a unified perspective on both sampling and variational inference in such generative models.
Meanwhile, \cite{kong2020sde} considered SDE to model epistemic uncertainty and proposed SDE-Net.
They theoretically analyze the existence and uniqueness of the solution to SDE-Net and particularly applied to out-of-distribution detection tasks. 
\cite{kidger2021neural} used the neural SDE as a generative model of time series. They demonstrated that the current classical approach to fitting SDEs can be viewed as a special case of (Wasserstein) GANs, bringing the neural and classical regimes together.

The other motivation to consider SDE comes from the randomness of neural network parameters. 
As to this perspective, \cite{liu2019neural} considered the stochastic term as regularization effect, dropout for instance, and they showed that generalization performance and adversarial robustness were improved by the noise effect. On the other hand, not restricting to model regularization effect, \cite{peluchetti2020infinitely} derived an SDE in a more general situation, assuming weight and bias are given randomly. 
They established the convergence of identity ResNets to SDE solutions as the number of layers increased to infinity.

\subsection*{Other Neural Differential Equation Models}
There are several other extensions of neural ODE. For instance, \cite{jia2019neural} and \cite{herrera2021neural} considers the case when there are jumps in observed data. Both models are described by a neural ODE with jumps at random times. 
\cite{jia2019neural} considers hybrid systems which evolve continuously over time, but may also be interrupted by stochastic events. They extended the framework of Neural ODEs with a stochastic process term that models temporal point processes with a piecewise-continuous latent trajectory. 
In contrast, \cite{herrera2021neural} introduced a mathematical framework to precisely describe the problem statement of online prediction and filtering of a stochastic process with temporal irregular observations.
They then used a neural ODE to model the conditional expectation between two observations, which jumps whenever a new observation is made.
On the other hand, \cite{kidger2020neural} demonstrated how controlled differential equations may extend the neural ODE model and propose the neural controlled differential equation model which is the continuous analogue of an RNN.

\section{Preliminaries}\label{sec_Prelim}
\subsection{Fractional Brownian Motion}
\begin{figure*}[t]
  \begin{minipage}[b]{0.19\linewidth}
    \centering
    \includegraphics[keepaspectratio, scale=0.2]{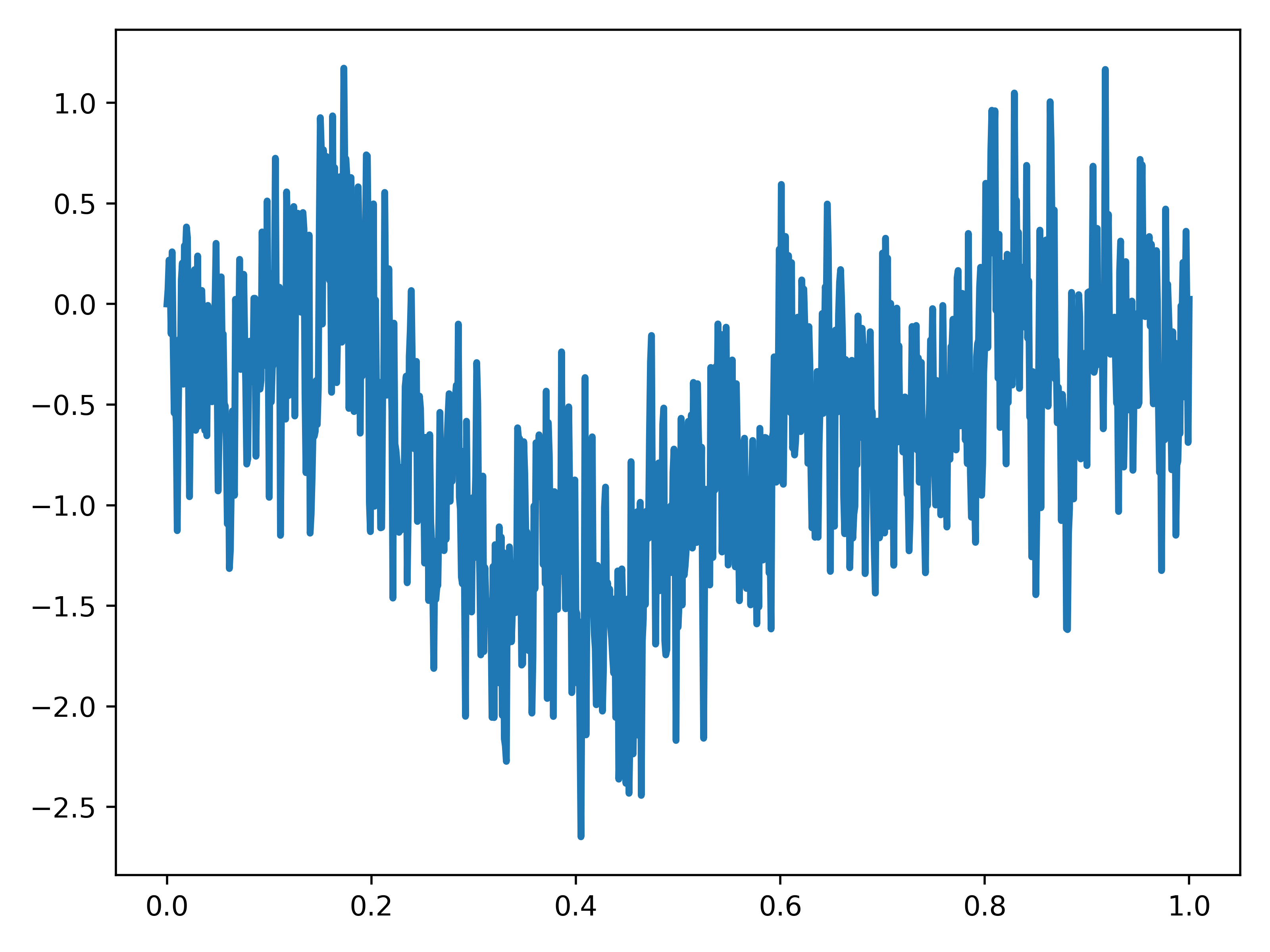}
    \subcaption{$H=0.1$}
  \end{minipage}
  \begin{minipage}[b]{0.19\linewidth}
    \centering
    \includegraphics[keepaspectratio, scale=0.2]{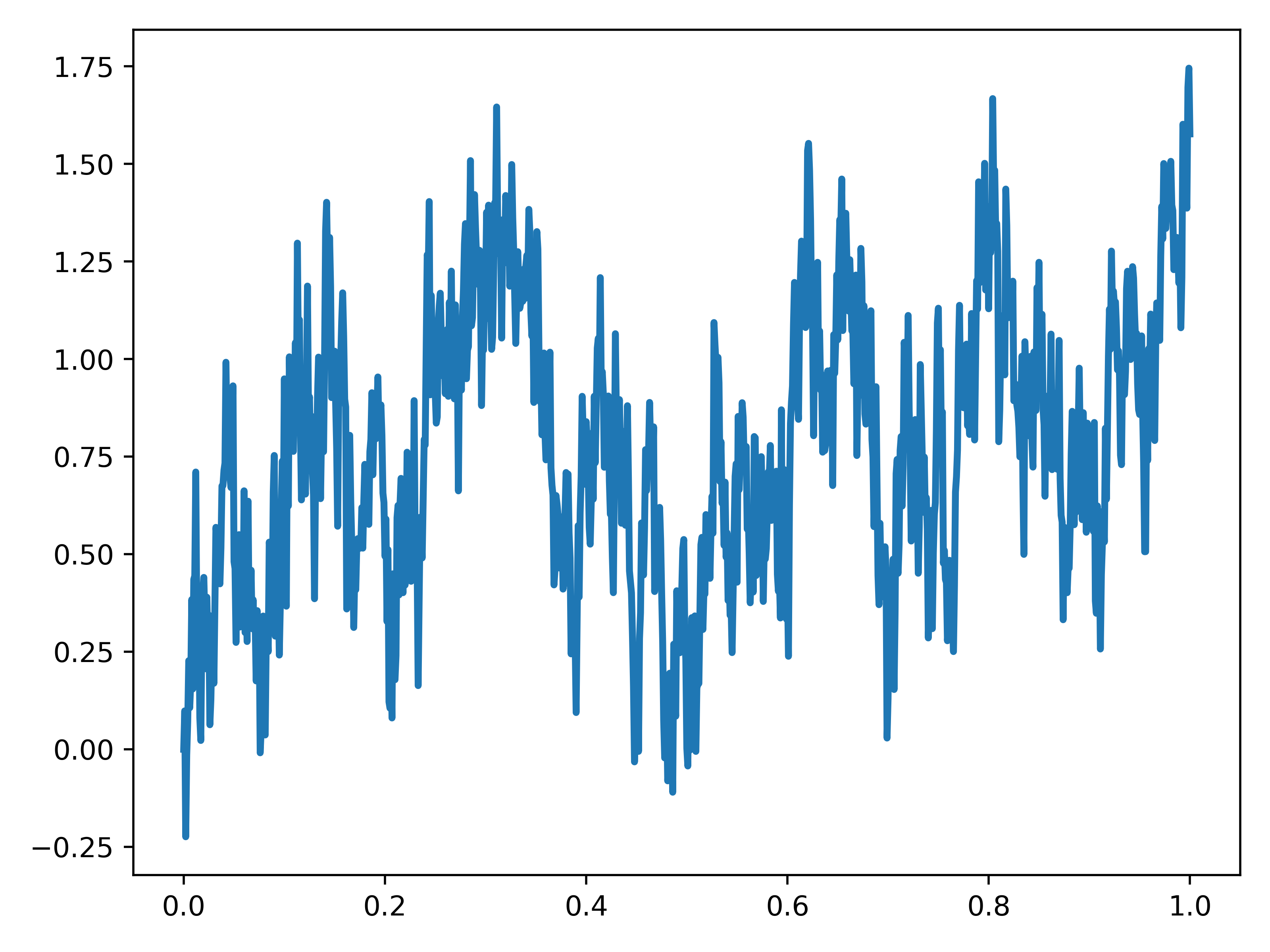}
    \subcaption{$H=0.25$}
  \end{minipage}
  \begin{minipage}[b]{0.19\linewidth}
    \centering
    \includegraphics[keepaspectratio, scale=0.2]{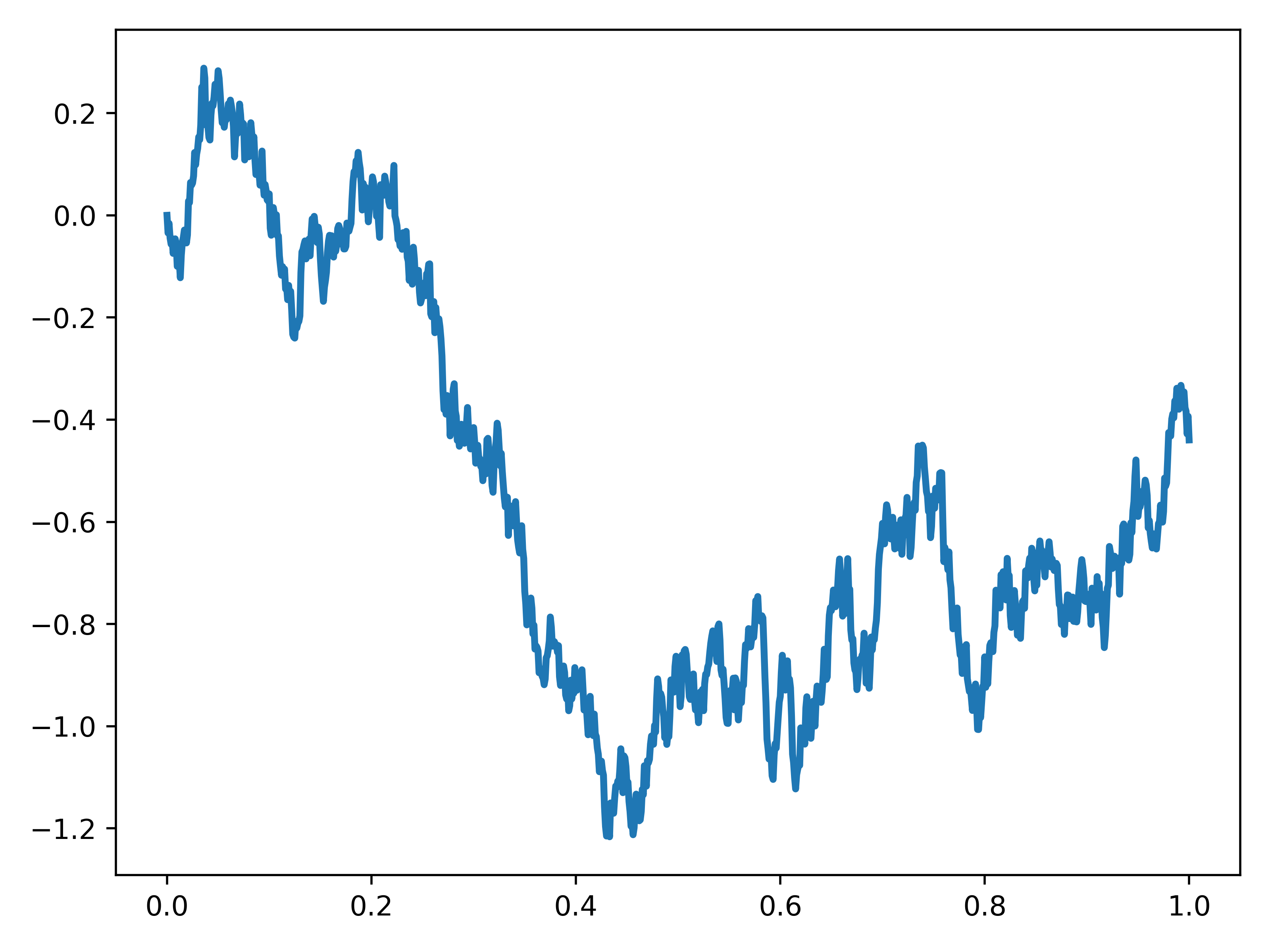}
    \subcaption{$H=0.5$}
  \end{minipage}
  \begin{minipage}[b]{0.19\linewidth}
    \centering
    \includegraphics[keepaspectratio, scale=0.2]{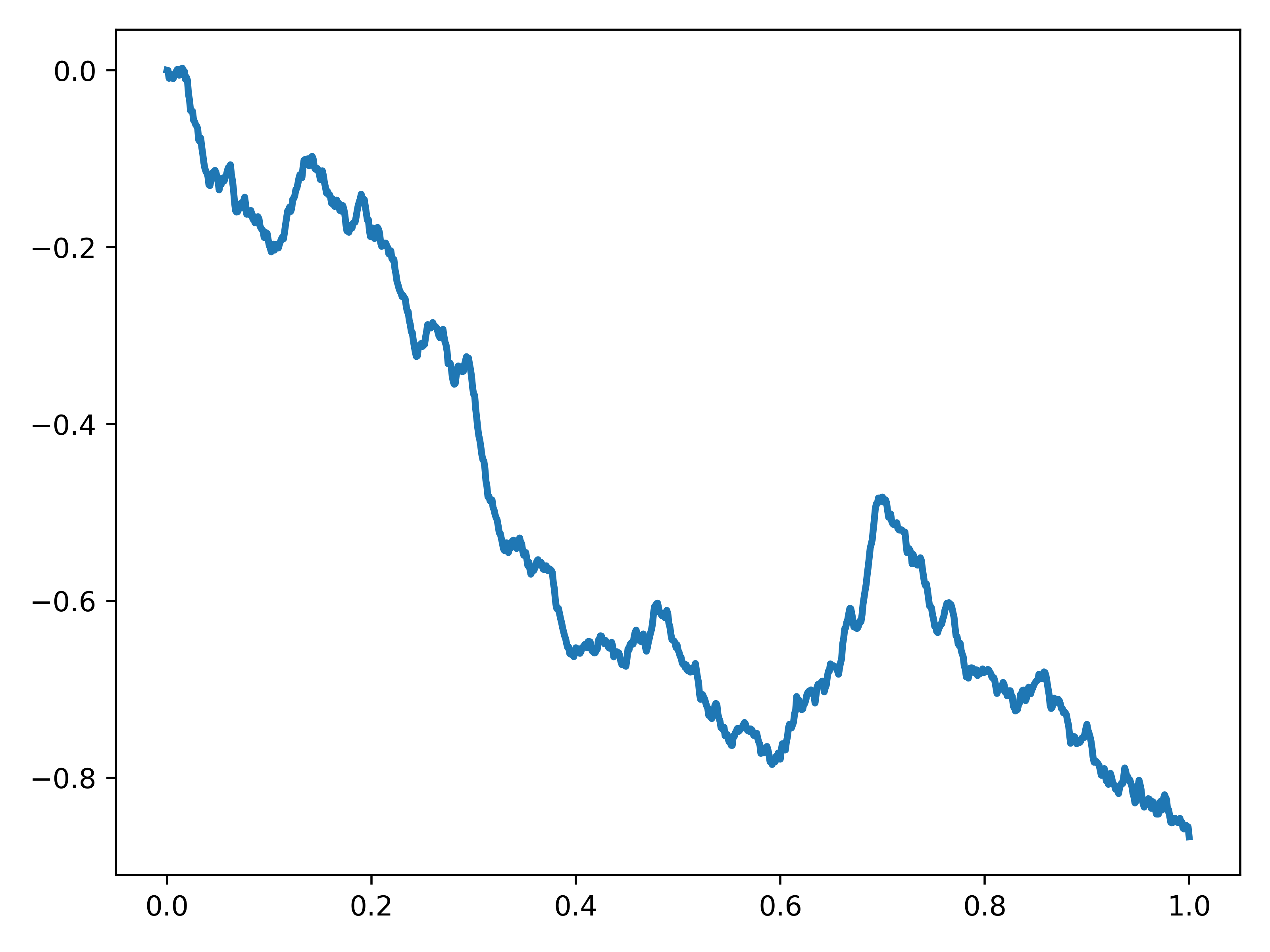}
    \subcaption{$H=0.75$}
  \end{minipage}
  \begin{minipage}[b]{0.19\linewidth}
    \centering
    \includegraphics[keepaspectratio, scale=0.2]{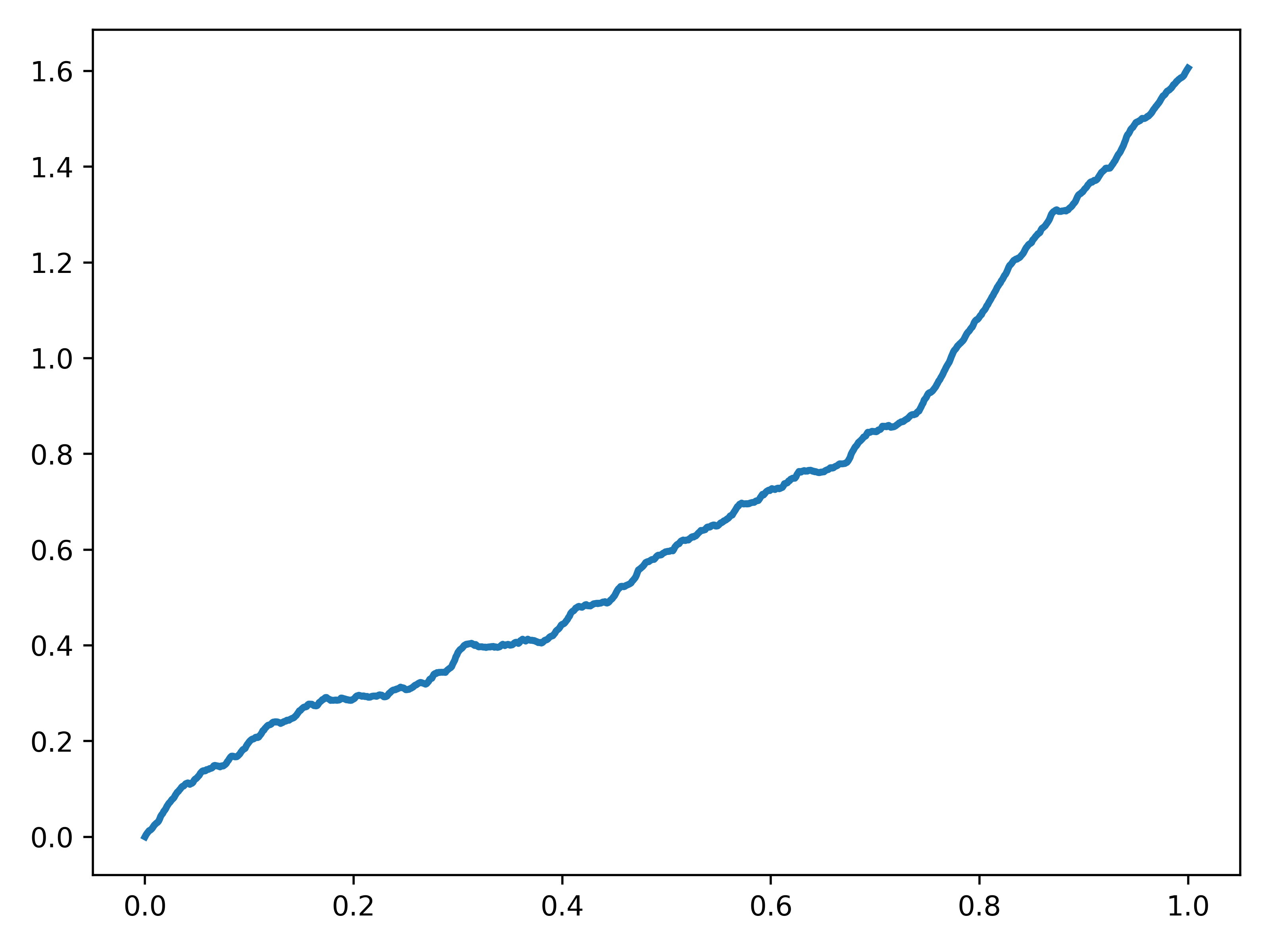}
    \subcaption{$H=0.9$}
  \end{minipage}
  \caption{Sample paths of fBm with various Hurst indices.}
  \label{fig:fbm_paths}
\end{figure*}

Here we review the definition and basic properties of fractional Brownian motion~(fBm). 
We first introduce a Gaussian process.
A Gaussian process $\{X_t\}_{t \ge 0 }$ is a collection of real-valued random variables $X_t$, all defined on the same probability space, such that any finite subset of ${X_t}$ has a Gaussian distribution.
Equivalently, $\{X_t\}_{t \ge 0 }$ is Gaussian process if every finite linear combination of ${X_t}$ is a Gaussian distribution. 
Then we define the fBm as follows.

\begin{definition}
Let $H \in (0,1) $ be a fixed number. We say that a real-valued mean-zero Gaussian process $B^H = \{ B^H_t \}_{t \ge 0 } $ is fractional Brownian motion~(fBm) with Hurst index $H$ if it satisfies $B^H_0 = 0 $ almost surely and \begin{equation}
\label{eq:fbm_def}
\mathrm{Cov}(B^H_s, B^H_t) 
= \frac{1}{2} (|t|^{2H} +|s|^{2H} - |t-s|^{2H}) 
\end{equation} 
holds for every $s, t \ge 0 $. 
\end{definition}
Note that fBm becomes the usual Brownian motion when $H=1/2$. 
From fBm we can define a discrete process $\{ B^H_{t+1} - B^H_t : t = 0, 1, \ldots \}$, which is called fractional Gaussian noise (fGn). It is straightforward from definition that increments of fBm are positively (reps. negatively) correlated if $H > 1/2$ (resp. $H<1/2$). Moreover, fBm has stationary increments since $B^H_t - B^H_s \sim \mathcal{N}(0,|t-s|^{2H})$. In particular, according to the Kolmogorov-Chentsov continuity criterion, almost all paths of fBm with Hurst index $H$ have $(H-\varepsilon)$-H\"{o}lder regularity for any $\varepsilon > 0$: there exists $C>0$ such that $
|B^H_t- B^H_s| \le C |t- s|^{H-\varepsilon} $ for every $s,t \ge 0$. In other words, sample paths of fBm have better regularity as the Hurst index becomes large. This can be observed in Fig \ref{fig:fbm_paths} which displays sample paths of fBm with various values of Hurst indices.

Next, we recall the notion of long-range dependency of stochastic processes. 
\begin{definition}
Let $X = \{ X_t \}_{t\ge 0}$ be a (generic) process and we write its increments by $X_{s,t} = X_t- X_s$. We say that the increments of the process $X$ exhibit long-range dependency if for all $h>0$, 
\[
\sum_{n=1}^\infty | \mathrm{Cov} (X_{0, h}, X_{(n-1)h, nh}) | = \infty . 
\]
\end{definition}

According to the relation \eqref{eq:fbm_def}, the increments of fBm exhibit long-range dependency if and only if $H>1/2$.

\subsection[R/S statistics]{$R/S$ statistics}
Following \cite{mandelbrot1972statistical}, we recall here the notion of $R/S$ statistic for testing whether long-range dependency exists in time-series data. Let $X = \{ X_t :t = 0, 1, \ldots, T \}$ be a discrete time series. We define the range of the process $X $ at time $T$ by 
\[
R_T = 
\max_{0 \le k \le T} \sum_{t=0}^k (X_t - \overline{X}_T)
- \min_{0 \le k \le T} \sum_{t=0}^k (X_t - \overline{X}_T) 
\]
where $\overline{X}_T = (1/T) \sum_{0 \le t \le T} X_t$ denotes the sample mean. Moreover, let $S_T$ be the sample standard deviation which is defined by
\[
S_T = \bigg( \frac{1}{T} \sum_{t=0}^T 
(X_t - \overline{X}_T )^2 \bigg)^{1/2} .
\]
Then the quantity $R_T/S_T$ is called the $R/S$ statistics. \cite{mandelbrot1972statistical} proved that the quantity $T^{-H} (R_T / S_T) $ for fGn converges in probability to a constant as $T$ goes to infinity. In particular, $R/S$ statistics asymptotically satisfy the relation 
\begin{equation}
\label{eq:RS}
\log(R_T/ S_T) =H \log T + \mathrm{const.} + o_p (1)
\end{equation}
where $o_p (1)$ is a quantity which vanishes in probability as $T$ tends to infinity. Thus Hurst index can be estimated by linear regression using the relation \eqref{eq:RS}. 

\subsection{SDE Driven by fBm}
To implement fSDE-Net, we study SDE where the standard Brownian motion is replaced by fBm. 
We refer the reader to \cite{karatzas2012brownian,revuz2013continuous} for a rigorous account of SDE.
Hereafter assume drift and volatility is homogeneous in time for simplicity and consider SDE driven by fBm of the form  
\begin{equation}
\label{eq:fSDE}
\begin{aligned}
 X_t = X_0 + \int_0^t b (X_s ) ds + \int_0^t \sigma (X_s) dB^H_s .
\end{aligned}
\end{equation}
To define solution to the equation \eqref{eq:fSDE}, one needs to define the integral with respect to fBm. 
For the classical case where $H=1/2$ so that the noise is actually the standard Brownian motion, we can define the third term in the RHS of \eqref{eq:fSDE} as the It\^{o} (or the Stratonovich) integral, using a martingale property. 
However, it is known that fBm with Hurst index $H$ is not even a semimartingale\footnote{Roughly speaking, a stochastic process is called a semimartingale if it can be decomposed as the sum of a (local) martingale and a right continuous with left limits adapted finite-variation process.} if $H \neq 1/2$. 
In particular, one cannot define integral with respect to fBm by means of the usual It\^{o} calculus in general. However, recall that the larger the Hurst index, sample paths of fractional Brownian motion has the better H\"{older} regularity. For the case of $H>1/2$, we can make use of this regularity improvement to define integral with respect to fBm as follows. In the sequel, we denote the set of all real-valued $\alpha$-H\"{o}lder continuous functions on $[0,T]$ by $C^\alpha ([0,T])$. Let $f \in C^\alpha([0,T])$ and $g \in C^\beta ([0,T])$ be real-valued deterministic functions. Then it is proved in \cite{young1936inequality} that if $\alpha + \beta > 1$, the integral 
\[
\int_0^T f (t) dg(t) = 
\lim_{|\Delta|\to 0} \sum_{i=1}^{n-1} f(t_i) (g(t_{i+1}) - g(t_i) )
\]
converges where $\Delta= \{ 0 = t_0 \le t_1 \le \cdots \le t_{n-1} \le t_n =T \}$ is a partition of $[0,T]$ and $|\Delta| = \max_{0 \le i \le n-1} |t_{i+1} - t_i|$. This integral is called Young integral. We say that a process $X = \{ X_t : t\in [0,T]\}$ is a solution of fractional SDE \eqref{eq:fSDE} if almost all paths of $X$ satisfy the equation \eqref{eq:fSDE} where the integral with respect to fBm is defined by Young integral. Now we are in a position to mention well-posedness of the fractional SDE \eqref{eq:fSDE}. It is expected that solutions of the equation \eqref{eq:fSDE} have the same regularity as fBm provided drift $b$ and volatility $\sigma$ are smooth enough, which leads to make integral with respect to fBm well-defined when $H>1/2$. Indeed, existence of a unique solution to the equation \eqref{eq:fSDE} are assured as follows. Let $C^m_b(\mathbb{R})$ be the set of all $C^m$-smooth real-valued functions whose derivatives up to order $m$ are bounded. 

\begin{proposition}
\label{prop:fSDEsolution}
Assume $H>1/2$ and $b, \sigma \in C^2_b (\mathbb{R})$. Then the equation \eqref{eq:fSDE} has a unique solution.
\end{proposition}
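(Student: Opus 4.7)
The plan is to work pathwise: fix $\omega$ in a full-measure set on which $B^H(\omega) \in C^{H-\varepsilon}([0,T])$ for some small $\varepsilon>0$ with $H-\varepsilon > 1/2$, and solve the equation as a deterministic problem in a suitable Hölder space. Because $H>1/2$, the condition $(H-\varepsilon)+(H-\varepsilon)>1$ of Young's inequality holds, so for any $Y\in C^{H-\varepsilon}([0,T])$ the integral $\int_0^{\cdot}\sigma(Y_s)\,dB^H_s$ is well defined as a Young integral. The assumption $b,\sigma\in C^2_b(\mathbb{R})$ guarantees that $b,\sigma$ are bounded and globally Lipschitz, with bounded and Lipschitz first derivatives; this is what is needed to control the composition $s\mapsto \sigma(Y_s)$ in Hölder norm and to close the estimates.

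Concretely, I would set, for a fixed initial condition $X_0$ and a candidate solution $Y$, the Picard map
\[
\Phi(Y)_t \;=\; X_0 + \int_0^t b(Y_s)\,ds + \int_0^t \sigma(Y_s)\,dB^H_s,
\]
and show that on a sufficiently small interval $[0,T_1]$, $\Phi$ is a contraction on the closed ball $\{Y\in C^{H-\varepsilon}([0,T_1]): Y_0=X_0,\ \|Y\|_{H-\varepsilon}\le R\}$ with respect to a norm of the form $\|Y\|_{\infty}+\|Y\|_{H-\varepsilon}$. The first integral is an ordinary Lebesgue integral and is trivially Lipschitz in $Y$ because $b$ is bounded and Lipschitz. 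The Young integral is handled by the standard Love--Young estimate
\[
\Bigl|\int_0^t f\,dg - f(0)(g(t)-g(0))\Bigr| \;\le\; C\,\|f\|_{\alpha}\,\|g\|_{\beta}\,t^{\alpha+\beta},
\]
applied with $f=\sigma(Y)$ and $g=B^H$, which both controls the Hölder seminorm of $\Phi(Y)$ and yields a Lipschitz estimate of the form $\|\Phi(Y)-\Phi(\tilde Y)\|_{H-\varepsilon}\le C(\omega)\,T_1^{\eta}\|Y-\tilde Y\|_{H-\varepsilon}$ for some $\eta>0$, using $\sigma\in C^2_b$ to bound $\|\sigma(Y)-\sigma(\tilde Y)\|_{H-\varepsilon}$ in terms of $\|Y-\tilde Y\|_{\infty}+\|Y-\tilde Y\|_{H-\varepsilon}$. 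Banach's fixed point theorem then gives a unique local solution, and because $b,\sigma$ are bounded, any local solution admits an a priori Hölder bound independent of the interval length, so the local solution extends to all of $[0,T]$ by iterating on consecutive intervals of uniform length. Uniqueness on $[0,T]$ follows by comparing two solutions interval by interval via the same contraction estimate.

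The main obstacle is the Young-integral Lipschitz estimate, namely bounding $\|\sigma(Y)-\sigma(\tilde Y)\|_{H-\varepsilon}$ by $\|Y-\tilde Y\|_{\infty}+\|Y-\tilde Y\|_{H-\varepsilon}$ times a constant that does not blow up along the iteration. This is where the full $C^2_b$ assumption is used: Lipschitzness of $\sigma'$ lets one write
\[
\sigma(Y_t)-\sigma(\tilde Y_t)-\sigma(Y_s)+\sigma(\tilde Y_s) \;=\; \int_0^1\bigl[\sigma'(Y_s+\lambda(Y_t-Y_s)) - \sigma'(\tilde Y_s+\lambda(\tilde Y_t-\tilde Y_s))\bigr]\,d\lambda\cdot(Y_t-Y_s)+ \cdots,
\]
from which the required Hölder increment estimate follows after some bookkeeping. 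Once this estimate is in hand, the contraction and extension arguments are routine, and Proposition~\ref{prop:fSDEsolution} follows; the argument is essentially the Nualart--R\u{a}\c{s}canu fixed-point scheme specialized to the globally bounded $C^2_b$ setting assumed here.
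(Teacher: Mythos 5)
Your argument is correct and is precisely the standard pathwise proof that Proposition~\ref{prop:fSDEsolution} rests on: the paper states the result without proof, as a known consequence of Young-integral/fixed-point theory (Nualart--R\u{a}\c{s}canu, Lyons), and your Picard contraction in $C^{H-\varepsilon}([0,T_1])$ --- with the $C^2_b$ hypothesis used exactly where it is needed, to make $Y\mapsto\sigma(Y)$ locally Lipschitz in H\"older norm --- followed by global extension via the boundedness of $b$ and $\sigma$, is that proof. No gaps.
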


\section{fSDE-Net: Neural Fractional Stochastic Differential Equation Network}\label{sec_fSDE}
Here we describe our fSDE-Net and connect with the recent literature on neural ODEs and SDEs.
We begin with the origin of neural ODE when all variables are deterministic. 
First, recall that a building block of a ResNet with the residual mapping $f (h_t, \theta_t) $ can be described using the following form:
\begin{equation}
\label{eq:resnet}
h_{t+1} = h_t + f (h_t, \theta_t) 
\end{equation}
where $t \in \{ 0, \ldots, T\}$, $h_t$ is the input to the $i$-th residual network building block and $h_{t+1}$ is the corresponding output to the next building block. 
The parameters in this building block are denoted by $\theta_t$. 
In the relation \eqref{eq:resnet}, we take continuum limit by adding more layers and take small step size. 
Consequently, the resulting continuous dynamics of hidden units is described by an ordinary differential equation (ODE) 
\[
\frac{dh_t}{dt} = f (t, h_t, \theta) .
\]
After the appearance of the neural ODE, neural SDE is introduced by taking random effect into account, which is described as follows.
Adding noise using the standard Brownian motion $B = \{ B_t : t\in [0,T]\}$ of the form 
\[
dh_t = f(t, h_t, \theta) dt + g(t, h_t, \theta) dB_t .
\]
In neural SDE, an essential part is learned in the drift network $f$, while external noise effect is learned in the diffusion network $g$. 
Our proposal, fSDE-Net, generalizes the noise modeled by a standard Brownian motion to fBm $B^H$:
\[
dh_t = f(t, h_t, \theta) dt + g(t, h_t, \theta) dB^H_t .
\]
Here recall that fBm matches the standard one when $H=1/2$ so that is contains the standard one as a special case.
In particular, fSDE-Net we introduce in this paper is a generalization of the previous SDE-Net.
Moreover, it is known that he noise structure of time series data in many fields such as hydrology, telecommunications, economics, and finance can be described by the fBm~\cite{biagini2008stochastic} and thus fSDE plays a crucial role to describe the dynamics of such types of series with complexity.
From this point of view, our model is useful for generating time series with long-term memory property that appear in various fields of the real world.

\subsection{Numerical Scheme}
To implement neural fSDE, we need to numerically solve fSDE \eqref{eq:fSDE}. The classical Euler scheme works for this purpose when Hurst index is larger than $H>1/2$. We recall here a result which assures the convergence of discretized solutions. 
In the sequel, we denote by $\lfloor a \rfloor $ the maximal integer which does not exceed a real number $a$.
Let $X^n = \{ X^n_{i/n} : i = 0, \ldots, \lfloor nT \rfloor \}$ be discrete series defined by $X^n_0 = X_0$ and the recursive relation  
\[
\begin{aligned}
 X^n_{(i+1)/n } 
 = X^n_{i/n} &+ b(X^n_{i/n}) \frac{1}{n} \\
&+ \sigma(X^n_{i/n}) (B^H_{(i+1)/n} - B^H_{i/n}) .
\end{aligned}
\]
\if0
\[
\begin{aligned}
 X^n_{(i+1)/n } 
& = X^n_{i/n} + b(X^n_{i/n}) \frac{1}{n} + \sigma(X^n_{i/n}) (B^H_{(i+1)/n} - B^H_{i/n}) .
\end{aligned}
\]
\fi
Then we extend $X^n$ to a continuous-time process on $[0,T]$ as a step function, which is still denoted by the same notation: $X^n_t = X^n_{\lfloor nt \rfloor /n}$ for each $t \in [0,T]$. 
In this setting, we have the following convergence result of the discretized solution $X^n$ to the continuous one. (See \cite{neuenkirch2007exact} for the proof.)

\begin{proposition}
\label{prop:numerical}
Assume $H>1/2$ and the drift and volatility functions satisfy $b \in C^2_b(\mathbb{R}) $ and $\sigma \in C^3_b(\mathbb{R})$. Let $X = \{ X_t : t \in [0,T] \}$ be a unique solution of \eqref{eq:fSDE} and let $X^n = \{ X^n_t : t \in [0, T] \} $ be defined as above. Then we have that 
\[
n^{2H-1} \sup_{0 \le t \le T} | X^n_t - X_t | 
\]
converges almost surely to a constant random variable as $n$ tends to infinity. 
\end{proposition}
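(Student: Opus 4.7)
The plan is to extract the dominant component of the error $e^n_t := X^n_t - X_t$ by comparing the Euler recursion to the exact Young-integral equation, and then identify the limit of the normalized error via a pathwise ergodic argument for the fractional Gaussian noise. Writing $\tau(s) := \lfloor ns \rfloor / n$, subtracting the two integral equations gives
\begin{equation*}
e^n_t = \int_0^t \bigl[b(X^n_{\tau(s)}) - b(X_s)\bigr]\,ds + \int_0^t \bigl[\sigma(X^n_{\tau(s)}) - \sigma(X_s)\bigr]\,dB^H_s,
\end{equation*}
with the second integral understood in the Young sense. The first step is a first-order Taylor expansion of $\sigma$ (resp.\ $b$) around $X_s$, using $X^n_{\tau(s)} - X_s = e^n_{\tau(s)} - (X_s - X_{\tau(s)})$. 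This splits the equation into (i) a linear-in-$e^n$ part, later absorbed by a Young--Gronwall lemma, (ii) a quadratic-in-error remainder, and (iii) a forcing term driven by the within-cell discrepancy $X_s - X_{\tau(s)}$.

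The decisive forcing term comes from the diffusion and reads, up to smaller-order Young remainders,
\begin{equation*}
-\int_0^t \sigma'(X_s)\,\sigma(X_{\tau(s)})\,(B^H_s - B^H_{\tau(s)})\,dB^H_s.
\end{equation*}
Freezing coefficients on each grid cell $[i/n,(i+1)/n)$ and invoking the Young change-of-variable identity
\begin{equation*}
\int_{i/n}^{(i+1)/n} (B^H_u - B^H_{i/n})\,dB^H_u = \tfrac{1}{2}\bigl(B^H_{(i+1)/n} - B^H_{i/n}\bigr)^2,
\end{equation*}
which is legitimate because $B^H$ is $(H-\varepsilon)$-H\"older with $H-\varepsilon>1/2$, reduces the forcing to
\begin{equation*}
-\frac{1}{2}\sum_{i=0}^{\lfloor nt \rfloor - 1} (\sigma'\sigma)(X_{i/n})\,\bigl(B^H_{(i+1)/n} - B^H_{i/n}\bigr)^2,
\end{equation*}
which is naturally of order $n^{1-2H}$ since $\mathbb{E}\bigl[(B^H_{(i+1)/n} - B^H_{i/n})^2\bigr] = n^{-2H}$.

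The limit then follows from self-similarity and ergodicity of fBm: the rescaled squares $n^{2H}(B^H_{(i+1)/n} - B^H_{i/n})^2$ have the same joint law as the stationary ergodic sequence $(B^H_{i+1} - B^H_i)^2$, so a weighted ergodic theorem, combining Birkhoff with the pathwise continuity of $s \mapsto \sigma'(X_s)\sigma(X_s)$ (inherited from Proposition~\ref{prop:fSDEsolution}), yields
\begin{equation*}
n^{2H-1}\sum_{i=0}^{\lfloor nt\rfloor - 1} (\sigma'\sigma)(X_{i/n})\bigl(B^H_{(i+1)/n} - B^H_{i/n}\bigr)^2 \;\xrightarrow[n\to\infty]{\mathrm{a.s.}}\; \int_0^t \sigma'(X_s)\sigma(X_s)\,ds,
\end{equation*}
uniformly in $t$ once one exploits the H\"older regularity of the right-hand side. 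All other contributions---the drift correction of order $n^{-1}$, the quadratic-in-error residual, and the Young remainders from the two Taylor expansions---are $o(n^{1-2H})$ by standard Young estimates using $b\in C^2_b$ and $\sigma\in C^3_b$. Applying a Young--Gronwall lemma to the linearized equation for $e^n$ upgrades the pointwise forcing bound to a uniform bound on $[0,T]$, producing the claimed convergence.

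The principal obstacle is the pathwise management of the Young integrals with variable integrands: each Taylor step, coefficient freeze, and change of variables must be quantified sharply enough that the discarded remainders sit strictly below the target scale $n^{1-2H}$, using only the finite smoothness assumed on $b$ and $\sigma$. A secondary but genuine difficulty is upgrading the Birkhoff convergence of the quadratic sum from a single $t$ to the uniform-in-$t$ statement demanded by the supremum, which requires either an equicontinuity argument based on the $H$-H\"older regularity of the limit or a tightness estimate in a H\"older space.
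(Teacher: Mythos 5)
The paper offers no proof of this proposition: it is imported verbatim from Neuenkirch and Nourdin \cite{neuenkirch2007exact}, and your sketch is essentially a reconstruction of their argument --- subtract the two integral equations, Taylor-expand the coefficients, isolate the weighted quadratic-variation forcing $-\tfrac12\sum_i(\sigma'\sigma)(X_{i/n})(B^H_{(i+1)/n}-B^H_{i/n})^2$ via the Young change-of-variable formula, and propagate it through the linearized equation. The overall strategy is the right one, and your accounting of the negligible contributions (the $O(n^{-1})$ drift correction, the cubic increment sums, the quadratic-in-error remainders) is correct in order of magnitude.

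There is, however, one genuine gap: the almost-sure identification of the limit of the weighted quadratic sums. Self-similarity gives that, for each \emph{fixed} $n$, the family $\bigl(n^{2H}(B^H_{(i+1)/n}-B^H_{i/n})^2\bigr)_i$ has the same joint law as the stationary ergodic sequence $\bigl((B^H_{i+1}-B^H_i)^2\bigr)_i$, and Birkhoff applies to the latter; but equality in law for each $n$ separately only transfers convergence \emph{in probability} to the rescaled sums, because for different $n$ they are not ergodic averages of one and the same realization. Since the almost-sure nature of the convergence is the entire content of the proposition, this step needs a quantitative substitute: variance bounds on the centered weighted sums (they live in the second Wiener chaos, so all moments are controlled by the variance via hypercontractivity) combined with Borel--Cantelli along the full sequence of $n$ --- i.e.\ precisely the weighted quadratic-variation limit theorems that \cite{neuenkirch2007exact} actually invoke. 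A second, smaller repair: to extract the exact limit rather than merely an $O(n^{1-2H})$ bound, your closing Young--Gronwall step must be a stability estimate for the linear Young equation in a H\"older norm of exponent exceeding $1-H$, since uniform convergence of the forcing is not sufficient to pass to the limit inside the Young integral $\int_0^t\sigma'(X_s)\,e^n_{\tau(s)}\,dB^H_s$. With those two points filled in, the normalized error converges to the solution $Z$ of the first-variation equation driven by $-\tfrac12(\sigma'\sigma)(X_t)\,dt$, and the quantity in the statement converges to $\sup_{0\le t\le T}|Z_t|$.
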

In other words, the discretized solution converges to the solution of \eqref{eq:fSDE} with precision order $O(n^{1-2H})$, which vanishes if $H>1/2$.

\subsection{Backpropagation}
Towards realization of an adjoint method as implemented for the ODE-Net, we mention here differentiation of fSDE solution with respect to parameters. We consider a process with parameters taking values in some space $\Theta$ whose time evolution is described by the following fractional SDE with Hurst index $H>1/2$. 
\begin{equation}
\label{eq:fsde_param}
dX_t = b(\theta , X_t) dt + \sigma (\theta, X_t ) dB^H_t  
\end{equation}
for each $\theta \in \Theta$. Here we assume the parameter space $\Theta $ is an open set in Euclidean space for simplicity. Let $\mathcal{L}: C([0,T]) \to \mathbb{R}$ be a functional acting on each continuous process. For fSDE-Net, the functional $\mathcal{L}$ is the total loss whose gradient with respect to neural net parameters $\theta $ needs to be calculated. In this situation, if the loss functional satisfies some good properties, then according to the chain rule we have that $\partial_\theta \mathcal{L}(X_\cdot) (\theta) = D\mathcal{L} (X_\cdot) \circ \partial_\theta X_\cdot (\theta )$ for each $\theta \in \Theta$ where $D\mathcal{L}$ denotes the Fr\'{e}chet derivative of the functional $\mathcal{L}$ which maps each element in $C([0,T])$ onto the space of linear functionals on $C([0,T])$. Since in the pathwise approach the equation \ref{eq:fsde_param} is interpreted as an ODE, the following result can easily be proved.  

\begin{theorem}
\label{thm:backprop}
Let $X_t = X_t^\theta$ be a unique solution of \eqref{eq:fsde_param} for each $\theta \in \Theta$. Assume $b, \sigma : \Theta \times \mathbb{R} \to \mathbb{R}$ are differentiable in $\theta \in \Theta$ and $x \in \mathbb{R}$, and assume $b_\theta (\theta, \cdot) , b_x(\theta, \cdot), \sigma_\theta(\theta, \cdot), \sigma_x(\theta, \cdot) \in C^2_b (\mathbb{R})$ for each $\theta$. Then for each $\theta \in \Theta$, the process $Y_t = \partial_\theta X_t $ satisfies the following equation. 
\begin{equation}
\label{eq:backprop}
\begin{aligned}
dY_t & = (b_\theta (\theta, X_t) + b_x (\theta, X_t) Y_t ) dt \\
& \quad + (\sigma_\theta (\theta, X_t) + \sigma_x (\theta, X_t) Y_t ) dB^H_t .
\end{aligned}
\end{equation}
\end{theorem}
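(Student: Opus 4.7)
The plan is to exploit the pathwise nature of the Young integral: since \eqref{eq:fsde_param} is interpreted trajectory-by-trajectory as a Young integral equation, the claim reduces to smooth dependence on a parameter for a classical (deterministic) Young integral equation. I would mirror the standard ODE proof of $C^1$ dependence on parameters, with the Love--Young inequality for Young integrals replacing the $L^\infty$ bound on the driving signal that is used in the ODE case.

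First I would rewrite \eqref{eq:fsde_param} in integral form,
\begin{equation*}
X_t^\theta = X_0 + \int_0^t b(\theta, X_s^\theta)\, ds + \int_0^t \sigma(\theta, X_s^\theta)\, dB^H_s,
\end{equation*}
and fix a typical sample path of $B^H$. Standard a priori estimates for Young equations (underlying Proposition \ref{prop:fSDEsolution}) give $X^\theta \in C^{H-\varepsilon}([0,T])$ for every small $\varepsilon>0$. Differentiating both sides formally in $\theta$, applying the chain rule to $b$ and $\sigma$, and interchanging $\partial_\theta$ with the two integrals, one is led to the integral form of \eqref{eq:backprop}:
\begin{align*}
Y_t &= \int_0^t \bigl(b_\theta(\theta, X_s) + b_x(\theta, X_s) Y_s\bigr)\, ds \\
&\quad + \int_0^t \bigl(\sigma_\theta(\theta, X_s) + \sigma_x(\theta, X_s) Y_s\bigr)\, dB^H_s.
\end{align*}
Viewed as a joint system in $(X,Y)$, this is itself a Young-type equation whose coefficients lie in $C^2_b$ by hypothesis, so Proposition \ref{prop:fSDEsolution} (applied in the augmented state space) produces a unique candidate process $Y$.

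Next I would identify this candidate with the true derivative $\partial_\theta X_t$ by a difference-quotient argument. Setting $\Delta^h_t := h^{-1}(X^{\theta+h}_t - X^\theta_t)$, second-order Taylor expansions of $b$ and $\sigma$ around $(\theta, X^\theta_s)$—licit by the $C^2$ assumption—express $\Delta^h - Y$ as the solution of an equation structurally identical to the one for $Y$, perturbed by remainders that vanish of order $O(h)$ in the relevant Hölder seminorm. A Gronwall-type estimate adapted to Young integrals, of exactly the sort that underlies the uniqueness part of Proposition \ref{prop:fSDEsolution}, then yields $\Delta^h \to Y$ uniformly on $[0,T]$, which is the assertion $\partial_\theta X_t = Y_t$.

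The main obstacle is rigorously commuting the $\theta$-derivative with the Young integral $\int_0^t \sigma(\theta, X^\theta_s)\, dB^H_s$ and controlling the Taylor remainders. The key tool is the Love--Young inequality
\begin{equation*}
\Bigl| \int_0^t f\, dg \Bigr| \le C\, \|f\|_\alpha\, \|g\|_\beta, \qquad \alpha + \beta > 1,
\end{equation*}
applied with $\alpha = H - \varepsilon$ (from the Hölder regularity of $\sigma(\theta, X^\theta_\cdot)$, inherited from $X^\theta$ via $\sigma \in C^2$) and $\beta = H - \varepsilon$ (from $B^H$). The required condition $\alpha+\beta = 2(H-\varepsilon) > 1$ is exactly what forces $H>1/2$, so this is the point where the assumption on the Hurst index is essential and where a naive extension to $H \le 1/2$ would break down.
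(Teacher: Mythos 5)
Your proposal is correct and takes essentially the same route as the paper, whose entire proof consists of writing \eqref{eq:fsde_param} in integral form and asserting that differentiation under the integral sign is applicable together with well-posedness of \eqref{eq:backprop}; your pathwise difference-quotient argument with second-order Taylor expansions, the Love--Young inequality, and a Young--Gronwall estimate is precisely the justification of that asserted step. One small repair: the augmented $(X,Y)$ system has coefficients that are affine, hence unbounded, in $Y$, so Proposition \ref{prop:fSDEsolution} does not apply verbatim and one should instead invoke well-posedness of \emph{linear} Young equations (no explosion under linear growth) --- a point the paper's own one-line proof glosses over in exactly the same way.
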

\begin{proof}
First write the fSDE \eqref{eq:fsde_param} in the integral form. 
Then by the assumption, differentiation under the integral sign is applicable so that \eqref{eq:backprop} is derived. 
Also, well-posedness of \eqref{eq:backprop} is assured by the assumption. 
\end{proof}

\begin{algorithm}[t]
\caption{Optimization of fSDE-Net generator}\label{alg:fsdenet}
\begin{algorithmic}
\Require $\{ X_0,\ldots, X_T \}$ - a realized path of a stochastic process until time horizon $T$, sample size $M$, learning rate $\eta$, number of optimization steps $k$
\Ensure $(\theta_b, \theta_\sigma)$ - the optimized parameter for fSDE-Net generator with some Hurst index $H$
\While{not converged}
\For{$k$ steps}
    \State Let $\{ \hat{X}^{(i)}_0, \ldots \hat{X}^{(i)}_T \}_{i=1}^M $ be the realization of generated paths of size $M$ and let $p_\theta(t, \cdot)$ be the pdf of log-difference process, which is estimated from generated values $\{ \hat{r}^{(i)}_t \}_{i=1}^M$ where $\hat{r}^{(i)}_t = \log (\hat{X}^{(i)}_{t+1} / \hat{X}^{(i)}_t)$. 
    \State Compute the gradient of loss function
    \[
    \mathcal{L}(\theta) = - \frac{1}{T} \sum_{t=0}^T \log p_\theta (t, r_t).
    \]
    \State Descent parameters: $
    \theta \gets \theta - \eta \nabla_{\theta} \mathcal{L} (\theta)$. 
\EndFor
\EndWhile
\end{algorithmic}
\end{algorithm}

\section{Generative Modeling of Time Series Using fSDE-Net}\label{sec_generative}
\subsection{Modeling with fBm}
One approach to generating synthetic time-series data is a latent model, which particularly decodes input data for dimensionality reduction. 
Instead, we focus on one-dimensional path and directly model observed data here.
Since modeling based on fBm is closely related to the financial market, we give a short description on the model and terms here.  
In mathematical finance, time series are usually modeled by using SDE driven by the standard Brownian motion. 
On the other hand, to describe a more general situation, several models using fBm have been proposed. 
For instance, stock price process $S = \{ S_t : t \ge 0 \}$ is modeled by Fractional Black-Scholes model (see \cite{willinger1999stock} for motivation) and Long-memory stochastic volatility model~\cite{comte1998long}.

\begin{itemize}
\item 
Fractional Black-Scholes model (see \cite{willinger1999stock} for motivation) with $H \neq 1/2$
\[
\begin{cases}
\begin{aligned}
& dr_t = r dt \\
& dS_t = r_t S_t dt + \sigma_t S_t dB^H_t 
\end{aligned}
\end{cases}
\]

\item
Long-memory stochastic volatility model (see \cite{comte1998long} for more detail)
\[
\begin{cases}
\begin{aligned}
& dS_t = r S_t dt + \sigma_t S_t dW_t \\
& d\sigma_t = \alpha \sigma_t dt + \beta  dB^H_t 
\end{aligned}
\end{cases}
\]
where $H>1/2$ and $W = \{ W_t : t \ge 0 \}$ is the standard Wiener process. 
\end{itemize}
It is known that when the time evolution of a stock price process is governed by the above models, arbitrage opportunities exist so that fBm enables us to give a model for inefficient market. 
On the other hand, it is also proved that the arbitrage opportunities disappear if there is a transaction cost (for example, minimum amount of time or payment between transactions, see \cite{cheridito2003arbitrage} for more precise description and its proof). 
This makes modeling financial time series with fBm reasonable, which simultaneously enables us to take long-range dependency or regularity of sample paths into account. 
In the sequel, to deal with more general situation, we assume the time evolution of some stochastic process $X = \{X_t\}_{t \ge 0}$, log-price of a stock $X_t = \log S_t$ for instance, is described by the following time-homogeneous fSDE 
\begin{equation}
\label{eq:fSDE_stock}
dX_t = b(\theta, X_t) dt + \sigma(\theta, X_t) dB^H_t 
\end{equation}
where drift $b$ and diffusion $\sigma$ are determined from neural networks (NN).  Hereafter we optimize $b$ and $\sigma$ with respect NN-parameters $\theta$.

\subsection{Network Architecture}
To estimate drift and volatility functions of \eqref{eq:fSDE_stock} by neural networks, we use $L$-layer multi-layer perceptron (MLP) as network architecture\footnote{We use merely MLP for simplicity and checking whether the solver works in a simple setting. We note that we can expect the improvement of fitting by optimizing network architecture.}:
\[
h^{\ell}_i = \sum_{j=1}^{N_{\ell-1}} w_{ij}^{\ell-1} x^{\ell-1}_j 
+ b^{\ell-1}_i , \quad 
x^{\ell}_i = \varphi(h^\ell_i) 
\]
for each $\ell = 1,\ldots, L$ and each $i = 1, \ldots, N_\ell$. In this case $x^0$ and $x^L$ be input and output data and NN function is such that $x^L = f_\theta (x^0)$ where $\theta = \{ w_{ij}^\ell, b^\ell_i \}_{i, j, \ell}$. In the sequel, let $\Theta$ be the space of NN parameters on which $\theta$ takes values. Then existence and uniqueness of fractional SDE, and precision assurance of numerical solutions are established as follows. 

\begin{theorem}[Informal]
The fSDE-Net generator \eqref{eq:fSDE_stock} with $H>1/2$ whose network architecture is given by MLP with $\mathrm{tanh}$ activation function has a unique solution, which can be numerically solved by the explicit Euler scheme.  
\end{theorem}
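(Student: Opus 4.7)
The plan is to reduce the claim to Propositions~\ref{prop:fSDEsolution} and~\ref{prop:numerical} by verifying that MLPs with $\tanh$ activations satisfy the required smoothness and boundedness hypotheses. Concretely, I would show that any scalar function $f_\theta : \mathbb{R} \to \mathbb{R}$ produced by the MLP layers defined in the paper with $\varphi = \tanh$ belongs to $C^\infty_b(\mathbb{R})$. Taking $b(\theta,\cdot) = f_{\theta_b}$ and $\sigma(\theta,\cdot) = f_{\theta_\sigma}$ then immediately yields $b \in C^2_b(\mathbb{R})$ and $\sigma \in C^3_b(\mathbb{R})$, after which the two propositions directly give existence and uniqueness of a solution to \eqref{eq:fSDE_stock} and convergence of the explicit Euler scheme at rate $O(n^{1-2H})$.

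The key step is thus to verify that $f_\theta \in C^\infty_b(\mathbb{R})$, which I would do by induction on the layer depth $L$. For the base case, note that $\tanh$ is real-analytic with $|\tanh^{(k)}(x)| \le M_k$ on all of $\mathbb{R}$ for every $k \ge 0$ and some constants $M_k$; this follows from $\tanh' = 1 - \tanh^2$ together with an induction showing that each $\tanh^{(k)}$ is a polynomial in $\tanh$, hence globally bounded. An affine map $x \mapsto w x + b$ is smooth with bounded derivatives of order $\ge 1$, so the first hidden-layer map $x \mapsto \tanh(w x + b)$ is smooth, bounded by $1$ in absolute value, and each of its derivatives is a scalar multiple (involving powers of $w$) of $\tanh^{(k)}(wx+b)$, hence globally bounded.

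For the inductive step, suppose each component of the hidden-layer output $x^{\ell-1}$ lies in $C^\infty_b(\mathbb{R})$ as a function of the scalar input. The next pre-activation $h^\ell_i = \sum_j w^{\ell-1}_{ij} x^{\ell-1}_j + b^{\ell-1}_i$ is a fixed affine combination of $C^\infty_b$ functions and so also lies in $C^\infty_b$; then $x^\ell_i = \tanh(h^\ell_i)$ remains in $C^\infty_b$ by the Fa\`{a} di Bruno formula, since every derivative of the composition is a finite sum of products of $\tanh^{(k)}(h^\ell_i)$ and derivatives of $h^\ell_i$, each of which is bounded by the inductive hypothesis and the uniform bounds on $\tanh^{(k)}$. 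Iterating up through layer $L$ gives $f_\theta \in C^\infty_b(\mathbb{R})$ as claimed.

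Having established $b, \sigma \in C^\infty_b(\mathbb{R})$, and in particular $b \in C^2_b(\mathbb{R})$ and $\sigma \in C^3_b(\mathbb{R})$, Proposition~\ref{prop:fSDEsolution} delivers a unique solution of \eqref{eq:fSDE_stock} and Proposition~\ref{prop:numerical} delivers almost sure convergence of the Euler scheme. I do not anticipate a substantive obstacle: the only mildly delicate part is the chain-rule bookkeeping in the inductive step, but this is routine because every quantity appearing is a polynomial in globally bounded functions. A useful remark is that exactly this argument fails for unbounded activations such as ReLU or the identity, since then the MLP output and its derivatives need not be bounded on $\mathbb{R}$; this is precisely why $\tanh$ (or any other $C^\infty$ activation with all derivatives bounded) is the natural recommendation for making the continuous generative model realizable.
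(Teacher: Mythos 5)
Your proposal is correct and follows essentially the same route as the paper's proof: the paper likewise observes that $\tanh$ is smooth with all derivatives bounded and that composition preserves this, so that $f_\theta$ meets the hypotheses of Propositions~\ref{prop:fSDEsolution} and~\ref{prop:numerical}. You simply make explicit the layer-by-layer induction and the Fa\`{a} di Bruno bookkeeping that the paper leaves implicit.
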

\begin{proof}
Note that the activation function $\mathrm{tanh}$ is smooth and its derivatives of any order are bounded. Thus, noting composition of functions inherits regularity and boundedness, we see that neural-net functions $f_\theta$ satisfy the assumptions in Proposition \ref{prop:fSDEsolution} and \ref{prop:numerical} for any $\theta \in \Theta$, which completes the proof.
\end{proof}

Here we note that the assumption on the network architecture is crucial to establish the well-posedness of differential equations. 
Indeed, if drift or diffusion function which drives the SDE grows faster than liner functions, then the solution may explode before in finite time and it becomes impossible to solve it numerically.
On the other hand, when the Lipschitz continuity of driving functions is not satisfied, uniqueness of solution may not hold even for the (deterministic) ODE case. 
In short, both the boundedness and the regularity assumptions on activation functions are crucial to assure the solvablility of neural differential equations. 
From this point of view, tanh is a good choice of activation function while one should be careful when using other functions with low regularity such as ReLU or hard tanh.\footnote{Importance of linear growth condition is pointed out also for discrete settings from the perspective of dynamical isometry. In \cite{pennington2018emergence}, it is proved that derivative of activation function should be bounded by one to circumvent the vanishing gradient problem for infinitely deep neural networks.}

\begin{table*}
  \caption{Numerical results for each performance metric and generative model for fractional OU process~(fOU) and other types of data from the real world. Bold indicates the best performance.}
  \label{tab:result}
  \centering
  \small
\begin{tabular}{ccccccc}
Data & Method & Hurst Index & Marginal Distribution & ACF & Weighted ACF & $R^2$ Score\\ 
\hline \hline
\multirow{4}{*}{fOU(0.7)} 
& Original & 0.7 (True value) & - & - & - & - \\
& RNN & 0.465 $\pm$ 0.154 & 0.194 $\pm$ 0.025 & \bf{1.220} & \bf{0.668} & -1.053 $\pm$ 0.177 \\  
& SDE & 0.608 $\pm$ 0.207 & 0.743 $\pm$ 0.189 & 4.043 & 3.261 & -2.023 $\pm$ 3.042 \\ 
& fSDE & \bf{0.618 $\pm$ 0.156} & \bf{0.328 $\pm$ 0.201} & 2.038 & 1.505 & \bf{-0.955 $\pm$ 0.509} \\ 
\hline
\multirow{4}{*}{fOU(0.8)} 
& Original & 0.8 (True value) & - & - & - & - \\
& RNN  & 0.455 $\pm$ 0.124 & 1.128 $\pm$ 0.027 &\bf{1.260} & \bf{0.668} & -14.072 $\pm$ 1.389 \\
& SDE  & 0.445 $\pm$ 0.137 & 0.946 $\pm$ 0.076 &1.342 &0.750 & -8.087 $\pm$ 1.301 \\
& fSDE & \bf{0.681 $\pm$ 0.151} & \bf{0.441 $\pm$ 0.116} & 1.673 & 1.173 & \bf{-0.374 $\pm$ 0.231} \\
\hline
\multirow{4}{*}{fOU(0.9)} 
& Original & 0.9 (True value) & - & - & - & - \\
& RNN & 0.490 $\pm$ 0.105 &  1.070 $\pm$ 0.026 & \bf{1.582} & \bf{0.985}      &  -17.470 $\pm$ 2.217 \\
& SDE &  0.495 $\pm$ 0.119 &  1.619 $\pm$ 0.065 & 1.709 & 1.123      &  -204.401 $\pm$ 56.750 \\
& fSDE & \bf{0.866 $\pm$ 0.120} & \bf{0.233 $\pm$ 0.143} & 3.487 & 1.805      &  \bf{-1.934 $\pm$ 1.855} \\ 
\hline
\multirow{4}{*}{SPX} 
& Original & 0.591 & - & - & - & - \\
& RNN &  0.479 $\pm$ 0.094 &  0.473 $\pm$ 0.012 & \bf{2.715} & \bf{1.404} & -3.185 $\pm$ 0.401 \\
& SDE & 0.513 $\pm$ 0.115 &  \bf{0.232 $\pm$ 0.067} & 3.052 & 1.720 & \bf{-2.844 $\pm$ 1.282} \\
& fSDE & \bf{0.529 $\pm$ 0.121} &  0.383 $\pm$ 0.062 & 2.785 & 1.469 & -4.036 $\pm$ 1.132 \\
\hline
\multirow{4}{*}{NileMin} 
& Original & 0.973 & - & - & - & - \\
& RNN       & 0.454 $\pm$ 0.117  & 0.848 $\pm$ 0.037 & \bf{1.476} & \bf{0.943}  & -8.798 $\pm$ 1.038 \\
& SDE       & 0.465 $\pm$ 0.139  & \bf{0.225 $\pm$ 0.057} &1.524        &1.021  & -2.041 $\pm$ 0.720 \\
& fSDE & \bf{0.697 $\pm$ 0.179} & 0.549 $\pm$ 0.208       &2.170        &1.556  & \bf{-1.264 $\pm$ 0.892} \\
\hline
\multirow{4}{*}{ethernetTraffic} 
& Original & 0.750 & - & - & - & - \\
& RNN & 0.467 $\pm$ 0.105 & 1.546 $\pm$ 0.008 & \bf{1.647} & \bf{1.125} & -43.778 $\pm$ 1.923 \\
& SDE & 0.590 $\pm$ 0.126  & \bf{1.075} $\pm$ 0.029 & 2.756 & 1.691 & -1.184 $\pm$ 0.168 \\
& fSDE & \bf{0.776 $\pm$ 0.155} & 1.174 $\pm$ 0.067 & 2.787 & 1.630 & \bf{-0.771 $\pm$ 0.427} \\
\hline
\multirow{4}{*}{NBSdiff1kg} 
& Original & 0.707 & - & - & - & - \\
& RNN & 0.463 $\pm$ 0.165  & \bf{0.471 $\pm$ 0.040} & \bf{1.105} & \bf{0.475} & -0.803 $\pm$ 0.273 \\
& SDE & 0.651 $\pm$ 0.200  & 0.532 $\pm$ 0.061 &1.149 & 0.545 & -1.049 $\pm$ 0.406 \\
& fSDE & \bf{0.669} $\pm$ 0.206  & 0.562 $\pm$ 0.107 &1.176 & 0.620 & \bf{-0.538 $\pm$ 0.504} \\
\hline
\multirow{4}{*}{NhemiTemp} 
& Original & 1.066 & - & - & - & - \\
& RNN & 0.479 $\pm$ 0.121 & 0.966 $\pm$ 0.024 & \bf{2.068} & \bf{1.344} & -16.310 $\pm$ 1.118 \\
& SDE & 0.593 $\pm$ 0.166  & 1.148 $\pm$ 0.287 & 4.402 & 2.980 & -70.181 $\pm$ 79.698 \\
& fSDE & \bf{0.742 $\pm$ 0.175} & \bf{0.462 $\pm$ 0.271} & 4.475 & 3.647 & \bf{-1.977 $\pm$ 1.049} \\
\hline
\end{tabular}
\end{table*}

\subsection{Algorithm}
In the sequel, let $\{X_0, \ldots, X_T\}$ be a realized path of a stochastic process, such as log price of a financial product, whose time evolution is assumed to follow some fSDE. 
For numerical reasons, the original data is normalized before learning in such a way that the log-difference process $\{ r_0, \ldots, r_T \}$ where $r_t = \log(X_{t+1}/X_t)$ has zero mean and unit variance. 
After this preprocessing, one can generate synthetic paths $\{ \hat{X}_0,\ldots, \hat{X}_T\}$ by solving fSDE \eqref{eq:fSDE_stock} whose drift and diffusion functions are given by MLP with parameter $\theta\in \Theta$ and then projecting the continuous solution on discrete observation points of the input data. 
In this way, from an input sample path, the fSDE-Net can generate synthetic paths with the same time stamps as the original path that is in general irregularly-sampled. 
We produce $M$ sample paths and denote an $i$-th realization by $\{ \hat{X}^{(i)}_0, \ldots, \hat{X}^{(i)}_T \}$ for each $i = 1, \ldots, M$. 
In the next step, we optimize NN parameters applying the maximal likelihood estimation here, making it possible to produce more realistic paths. 
For each time stamp $t \in \{ 0, 1, \ldots, T\}$, let $p_t (\theta, \cdot)$ be the probability density function of log-difference series $r_t $ on $\mathbb{R}$, which is calculated from generated values $\{ \hat{r}^{(i)}_t \}_{i=1, \ldots, M}$ where $\hat{r}_t = \log(\hat{X}_{t+1} / \hat{X}_t)$ for each $t$. 
Then we maximize the log-likelihood function 
\[
L(\theta) = \frac{1}{T} \sum_{t=0}^T \log p_\theta (t, r_t)
\]
or equivalently minimize a loss $\mathcal{L} (\theta)= - L (\theta)$ with respect to NN parameters $\theta \in \Theta$. 
Here the distribution of log-return is assumed be normal and each likelihood at time $t$ is approximated by using probability density function of the normal distribution whose mean and variance is calculated from generated values of log-difference process $\{ \hat{r}^{(1)}_t, \ldots, \hat{r}^{(M)}_t \}$. 
The generation procedure of the fSDE-Net is given in Algorithm \ref{alg:fsdenet}. 
Here we note that to calculate the gradient of the loss with respect to parameter $\theta$, Theorem \ref{thm:backprop} is helpful for an efficient calculation and for saving memory.

\subsection{Example of Generated Paths}
By using a fractional SDE solver, we can generate artificially time series given a sample path as input.
Figure \ref{fig:fOU0.9_sample_paths} displays a sample path of fractional Ornstein-Uhlenbeck process (see Section \ref{sec_exp} for its definition), compared to generated paths with recurrent neural network~(RNN), SDE-Net and fSDE-Net.
Looking the generated paths, as a natural consequence, one can see that the fSDE-Net inherits regularity of the original path, while the paths generated by other methods has the same regularity as the standard Brownian motion. 
Moreover, since RNN learns time-homogeneous dynamics, note that it is not appropriate for more irregularly-sampled series. 
In the next section, we quantitatively evaluate how our method reproduces characteristics of the original paths taken from various areas, with comparison to RNN and SDE-Net.

\begin{figure}[t]
  \begin{minipage}[b]{0.49\linewidth}
    \centering
    \includegraphics[keepaspectratio, scale=0.27]{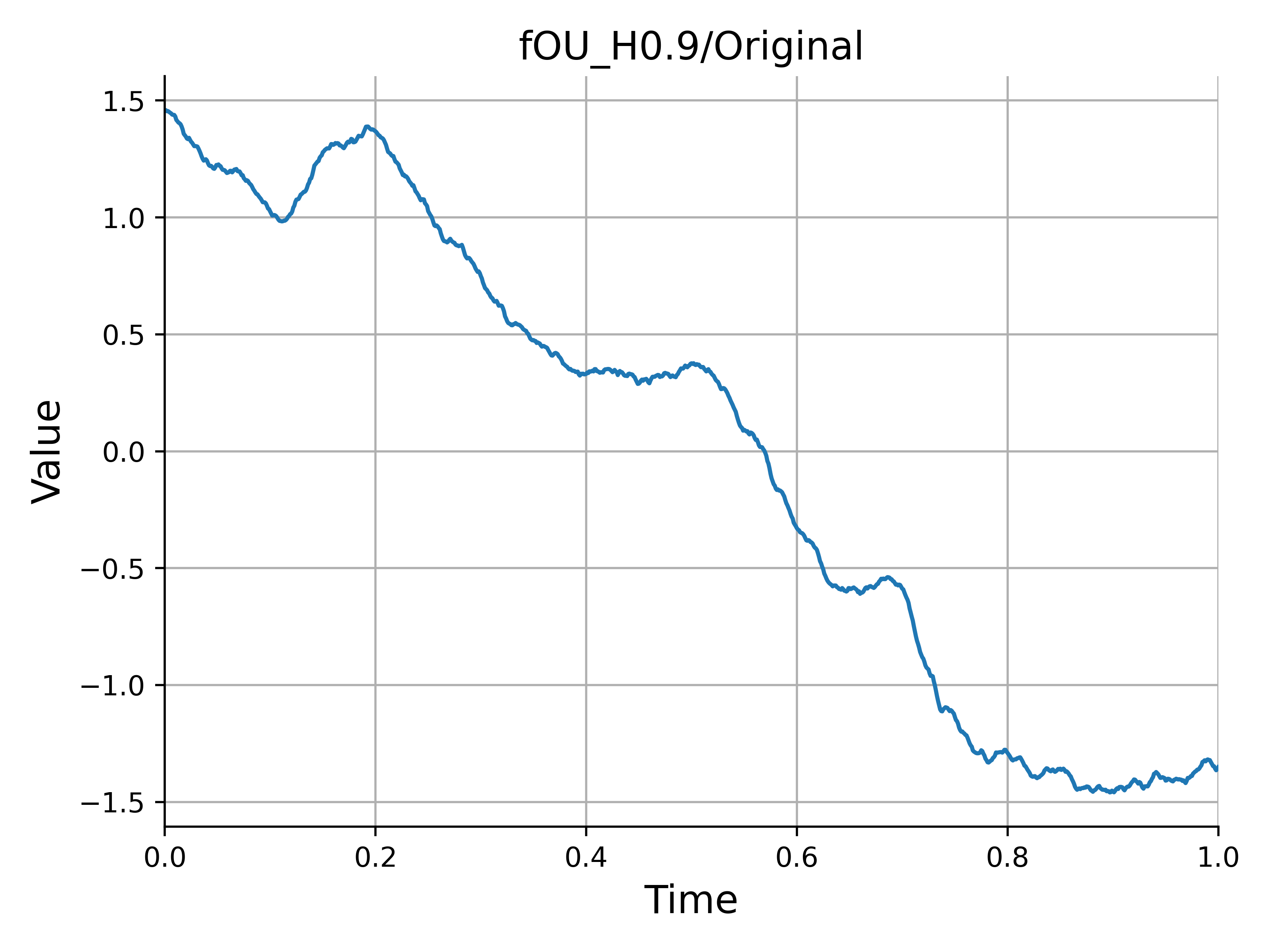}
    \subcaption{Original path}
  \end{minipage}
  \begin{minipage}[b]{0.49\linewidth}
    \centering
    \includegraphics[keepaspectratio, scale=0.27]{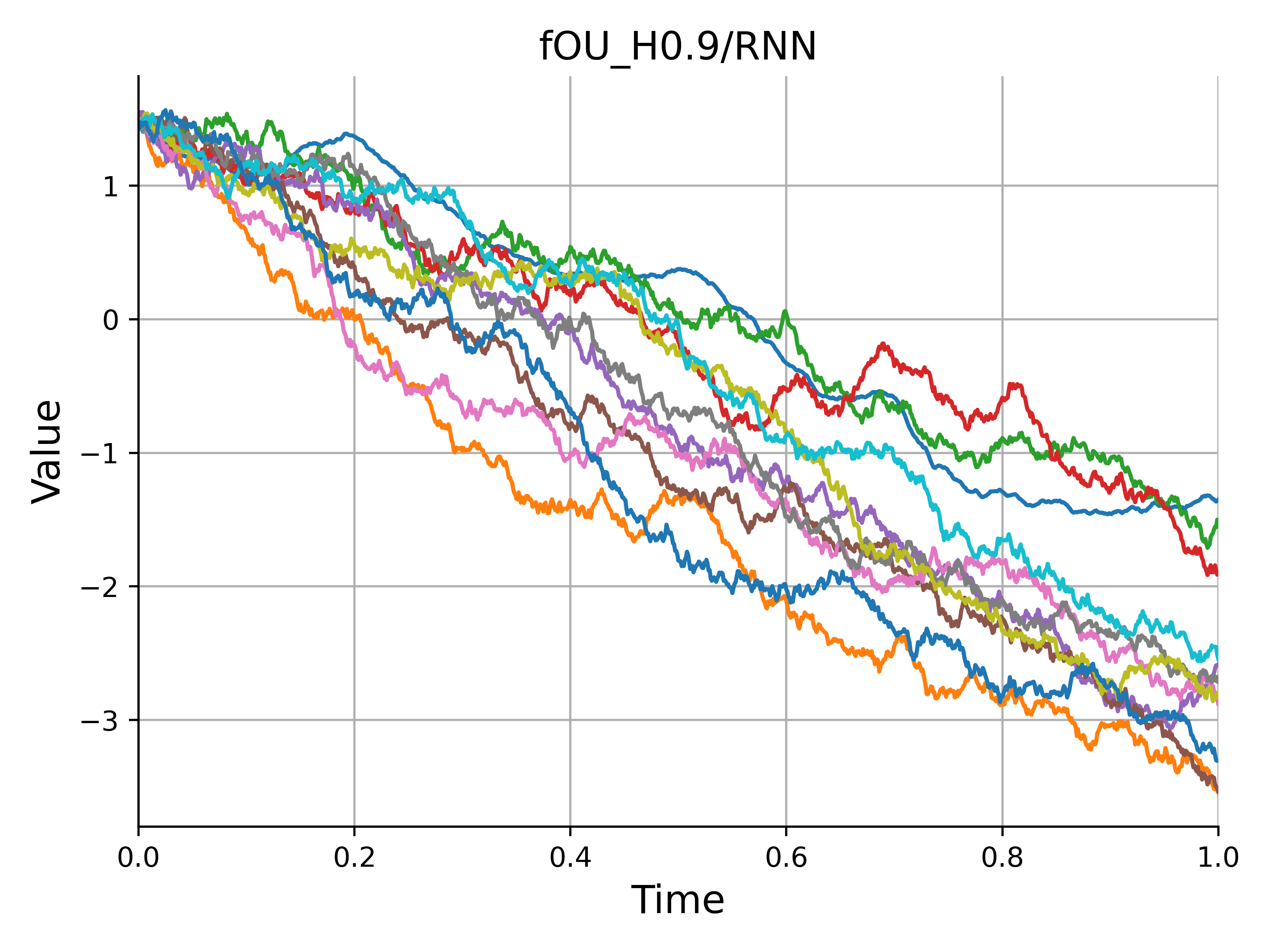}
    \subcaption{Generated paths by RNN}
  \end{minipage}
  \begin{minipage}[b]{0.49\linewidth}
    \centering
    \includegraphics[keepaspectratio, scale=0.27]{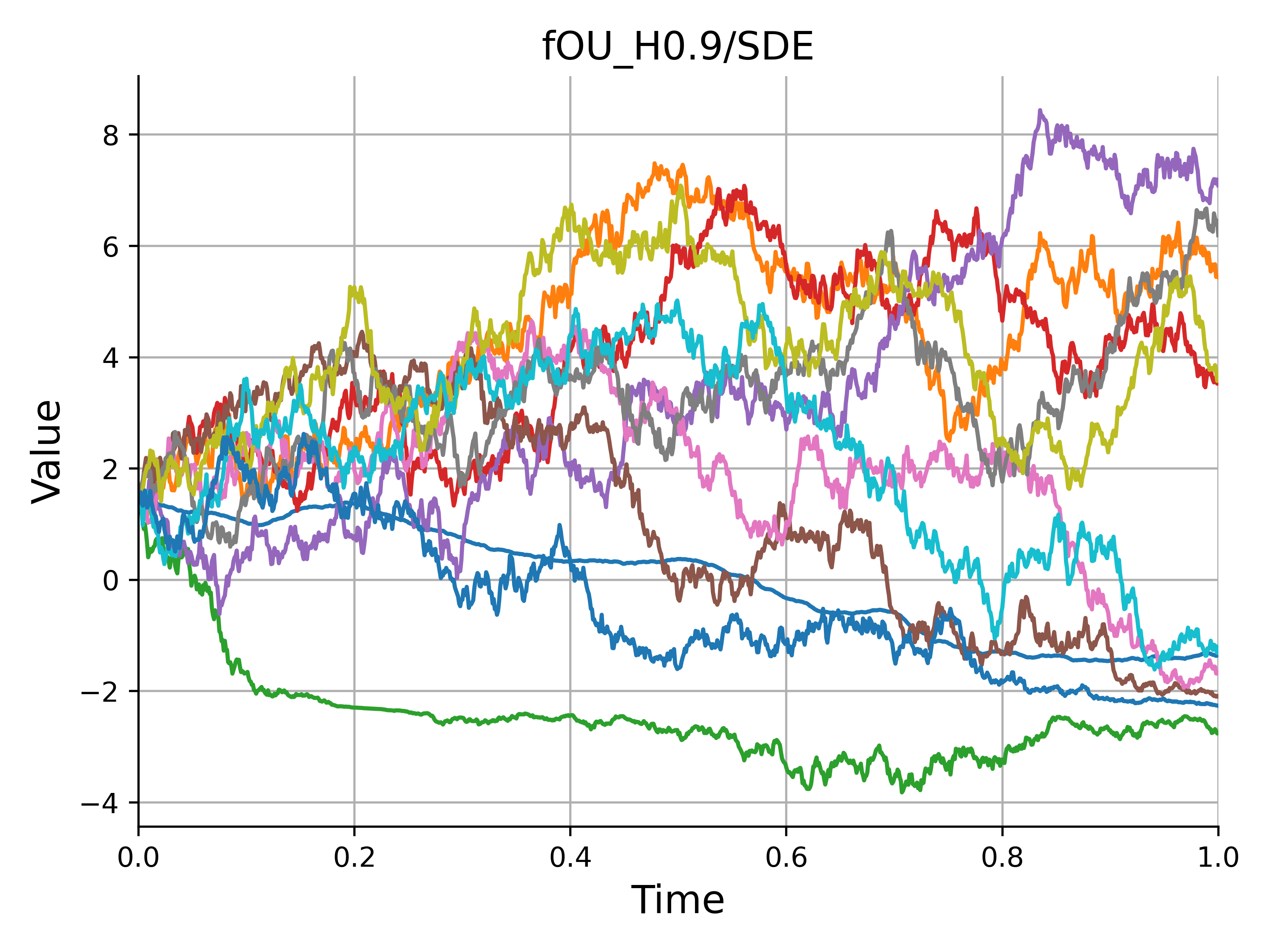}
    \subcaption{Generated paths by SDE-Net}
  \end{minipage}
  \begin{minipage}[b]{0.49\linewidth}
    \centering
    \includegraphics[keepaspectratio, scale=0.27]{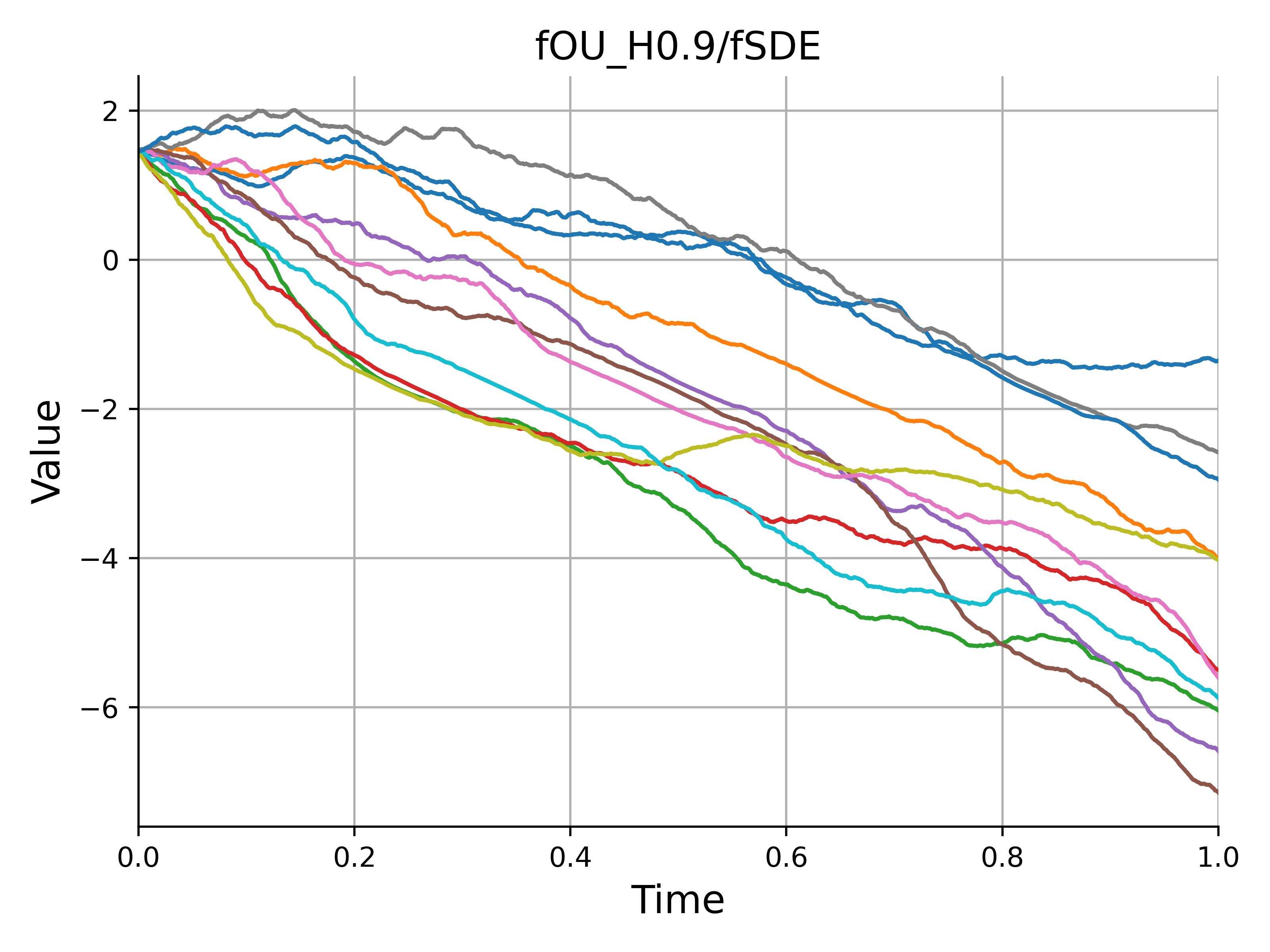}
    \subcaption{Generated paths by fSDE-Net}
  \end{minipage}
\caption{Comparison of the original path of fOU with Hurst index $0.9$ (a) and example of generated paths by RNN (b), SDE-Net (c) and fSDE-Net with $H=0.9$ (d), respectively.}
\label{fig:fOU0.9_sample_paths}
\end{figure}

\section{Experiments}\label{sec_exp}
\subsection{Dataset}
We use the following synthetic and real time-series data as input, from which we will generate new paths with the help of methods using neural networks.  
\subsubsection{Synthetic data} 
As a simple example of series that are compatible with the current situation, we make use of fractional Ornstein-Uhlenbeck (OU) process, with stationarity.
Fractional OU process is a stochastic process whose dynamics is described by the following fSDE.
\[
dX_t = \alpha X_t dt + \beta dB^H_t.
\]
In the sequel, we take $\alpha = -0.05$, $\beta=0.1$ where it is known that OU process is stationary when the parameter $\alpha$ is negative.
Moreover, we set $H=0.7, 0.8$ and $0.9$ to generate a sample path of the process for each value of Hurst index\footnote{We note that generating time series with LRD is not an easy task: (discrete) stochastic processes driven by i.i.d. variables, which are commonly used, do not exhibit LRD of increments. 
Hence we used simply fOU process, which seems adhoc, to ensure LRD of the original path.}. 
We denote them as fOU(0.7), fOU(0.8), and fOU(0.9), respectively.
A realization of each fractional OU process is obtained by dividing the interval $[0,1]$ into 1000 pieces with the common width. 

\subsubsection{Real data}
Also we apply our generation method for time-series data extracted from real world which exhibits long range dependency. 
\begin{itemize}
\item
SPX: Daily closing prices of the S\&P 500 index (SPX) from January 2000 until November 2021.

\item 
NileMin: Yearly minimal water levels of the Nile river for the years 622 to 1281, measured at the Roda gauge near Cairo. 

\item
ethernetTraffic: Ethernet traffic data from a LAN at Bellcore, Morristown. 


\item
NBSdiff1kg: NBS weight measurements - deviation from 1 kg in micrograms.

\item
NhemiTemp: Monthly temperature for the northern hemisphere for the years 1854-1989, from the data base held at the Climate Research Unit of the University of East Anglia, Norwich, England. 
The numbers consist of the temperature (degrees C) difference from the monthly average over the period 1950-1979.
\end{itemize}

The SPX data is obtained from Bloomberg terminal. 
For the SPX series, the estimated Hurst index value by the $R/S$ statistics is $0.591$ similar to previous studies\cite{bayraktar2004estimating}. 
In particular, the SPX data exhibits long-range dependency in this observation interval. The other series from real world are taken from the webpage of CRAN (Comprehensive R Archive Network). 
These can be downloaded in 
https://cran.r-project.org/web/packages/longmemo/index.html
and one can find a precise description of each series.

\subsection{Experimental Setup}
In the sequel, we quantitatively investigate the performance of the fSDE-Net based on some criterion. 
As existing time-series generators for comparison, recurrent neural networks (RNN) and SDE-Net are used for generators instead of fSDE-Net. 
As network architecture, we use 2-layer MLP for SDE- and fSDE-Net with 20 hidden units while for RNN generator we use a vanilla RNN with 40 hidden units.

\begin{figure*}[t]
\begin{minipage}[b]{0.5\linewidth}
    \centering
    \includegraphics[keepaspectratio, scale=0.42]{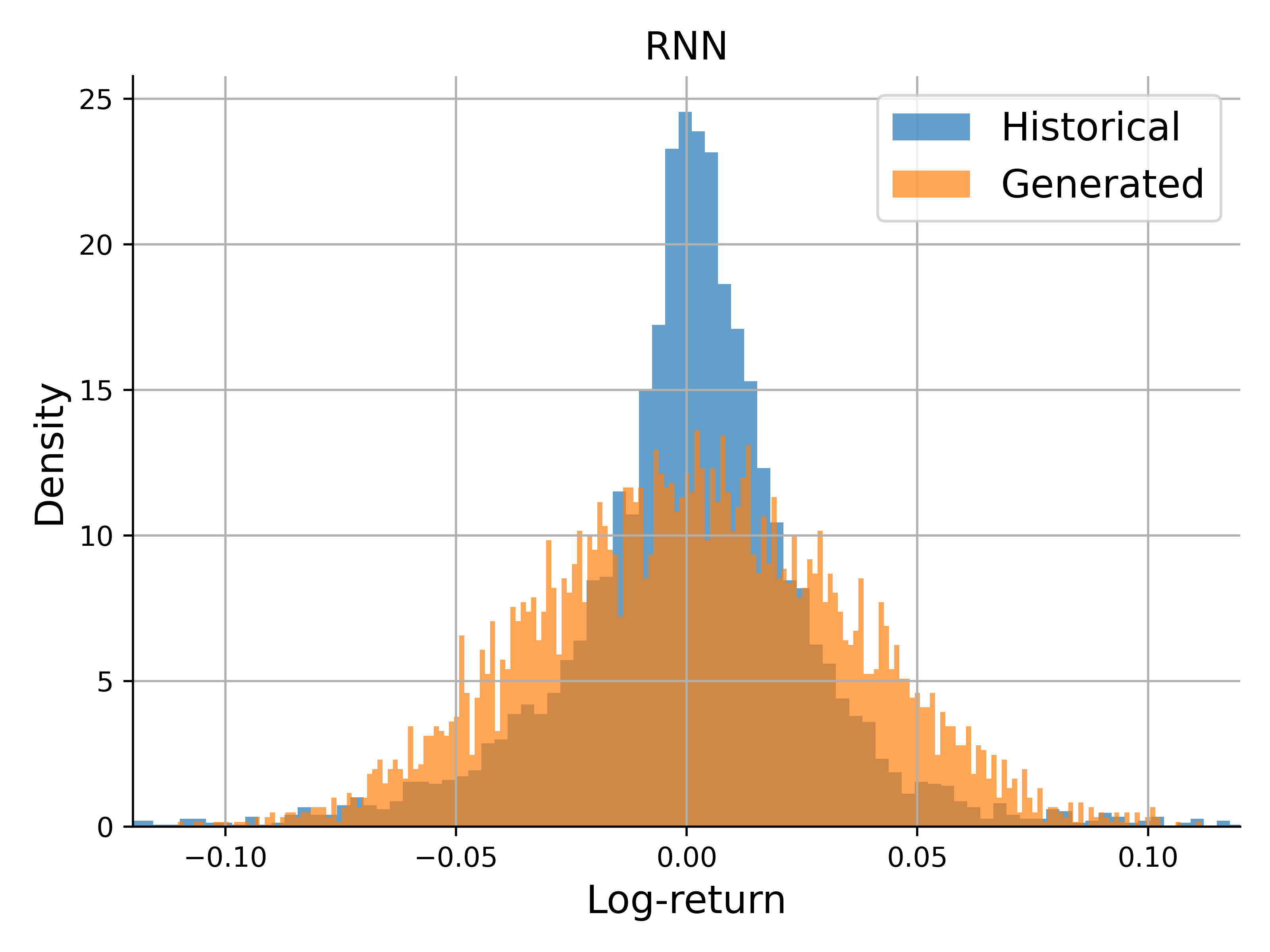}
\end{minipage}
\begin{minipage}[b]{0.5\linewidth}
    \centering
    \includegraphics[keepaspectratio, scale=0.42]{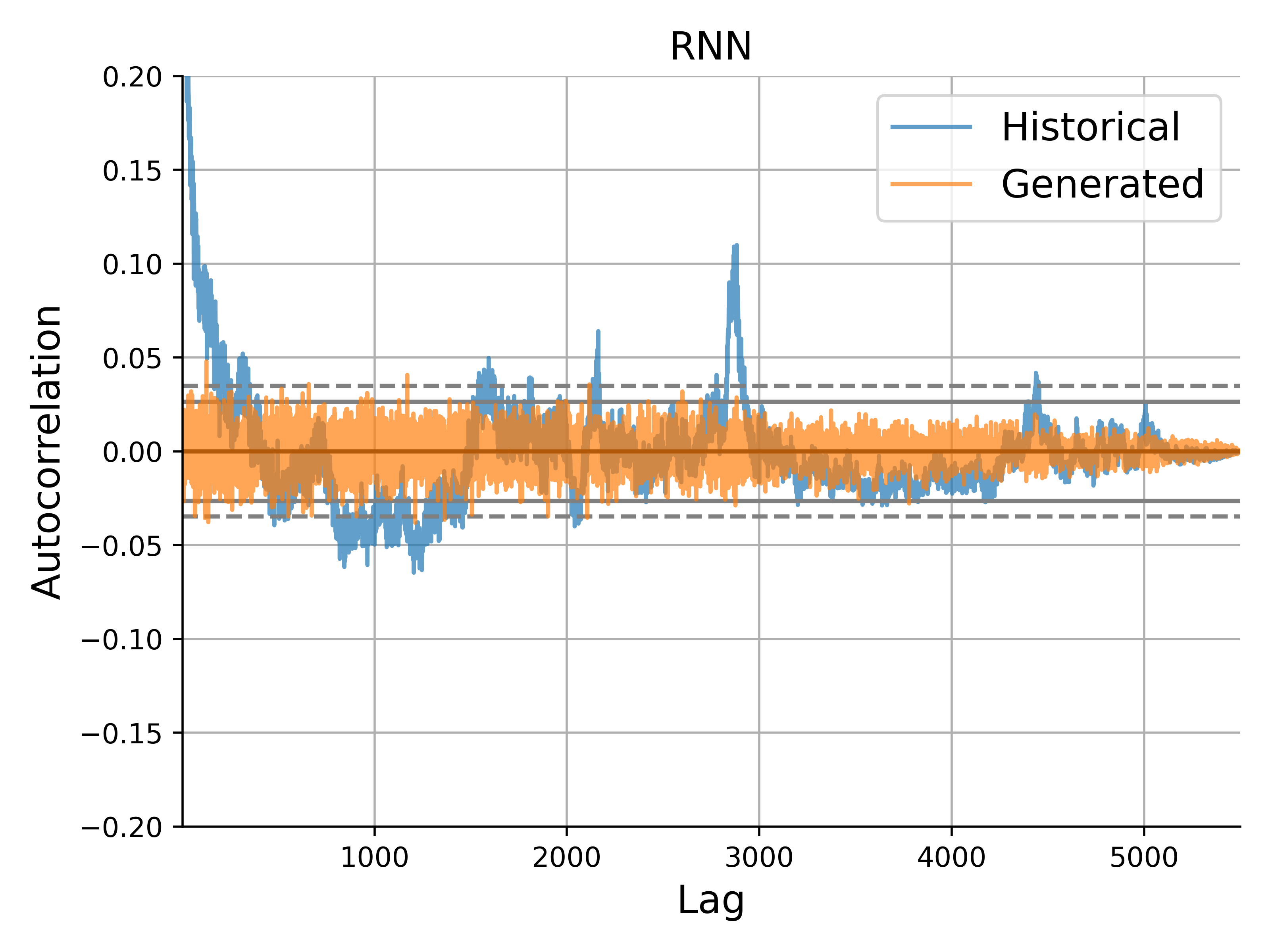}
\end{minipage} \\
\begin{minipage}[b]{0.5\linewidth}
    \centering
    \includegraphics[keepaspectratio, scale=0.42]{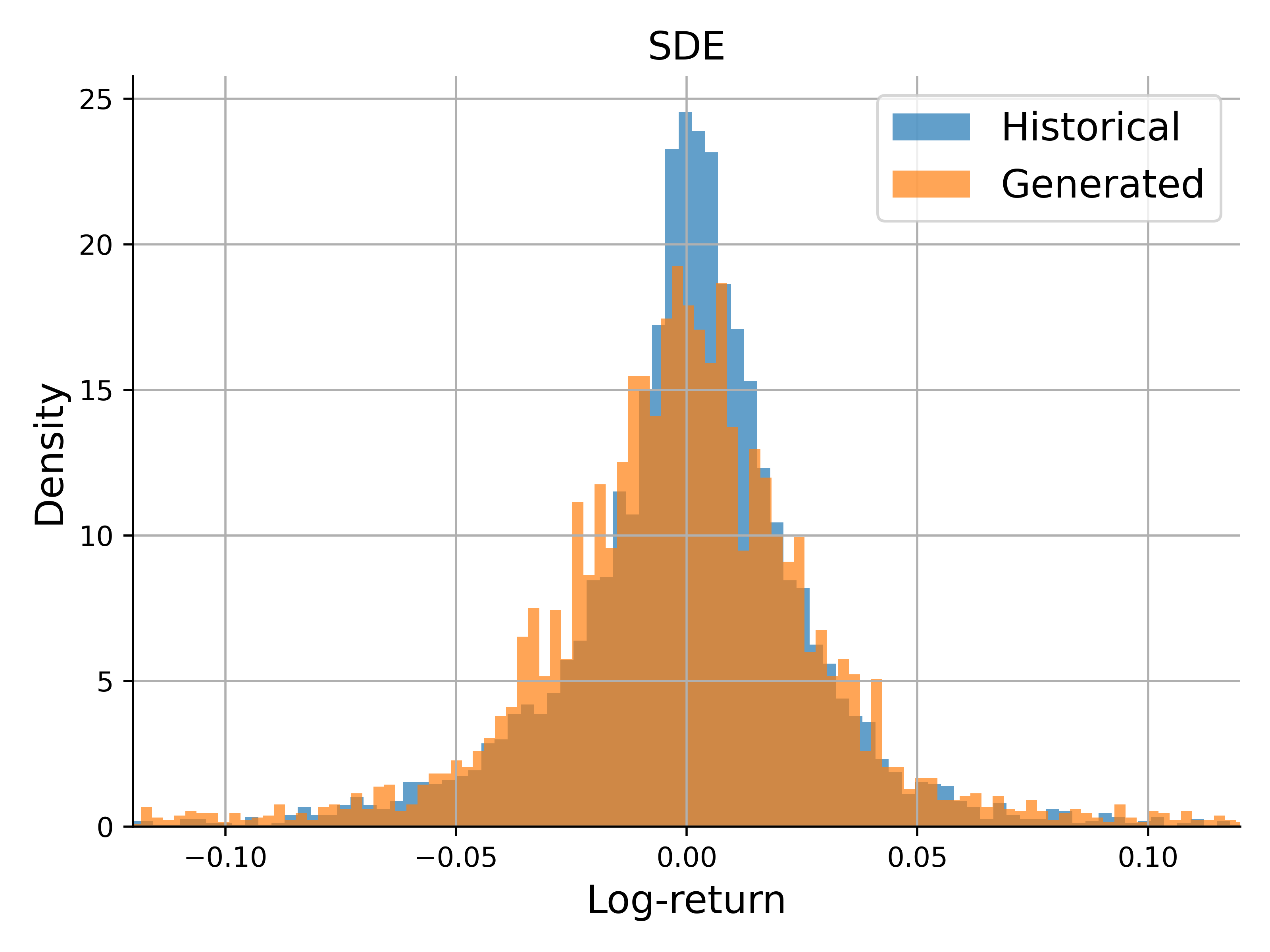}
\end{minipage}
\begin{minipage}[b]{0.5\linewidth}
    \centering
    \includegraphics[keepaspectratio, scale=0.42]{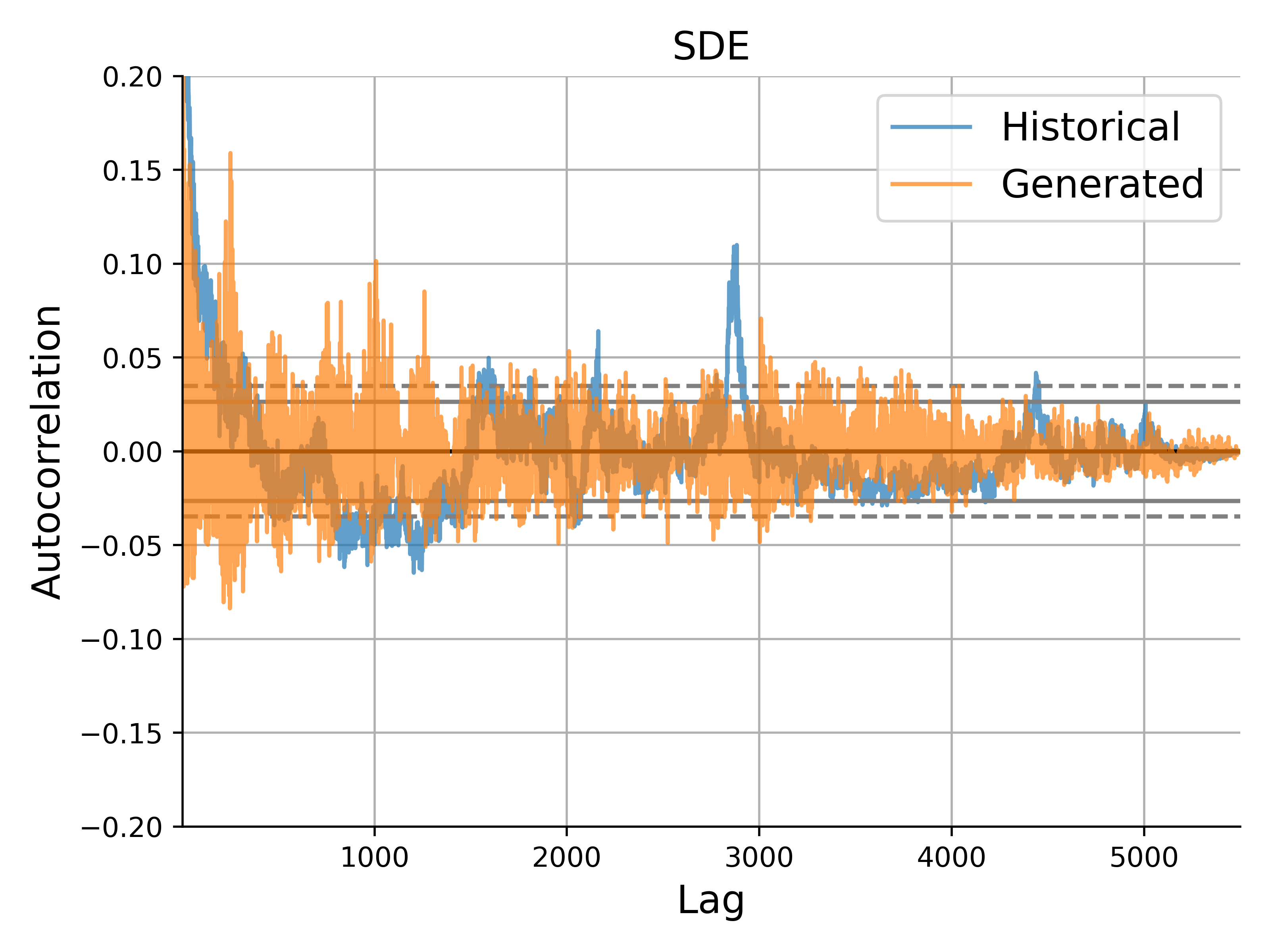}
\end{minipage} \\
\begin{minipage}[b]{0.5\linewidth}
    \centering
    \includegraphics[keepaspectratio, scale=0.42]{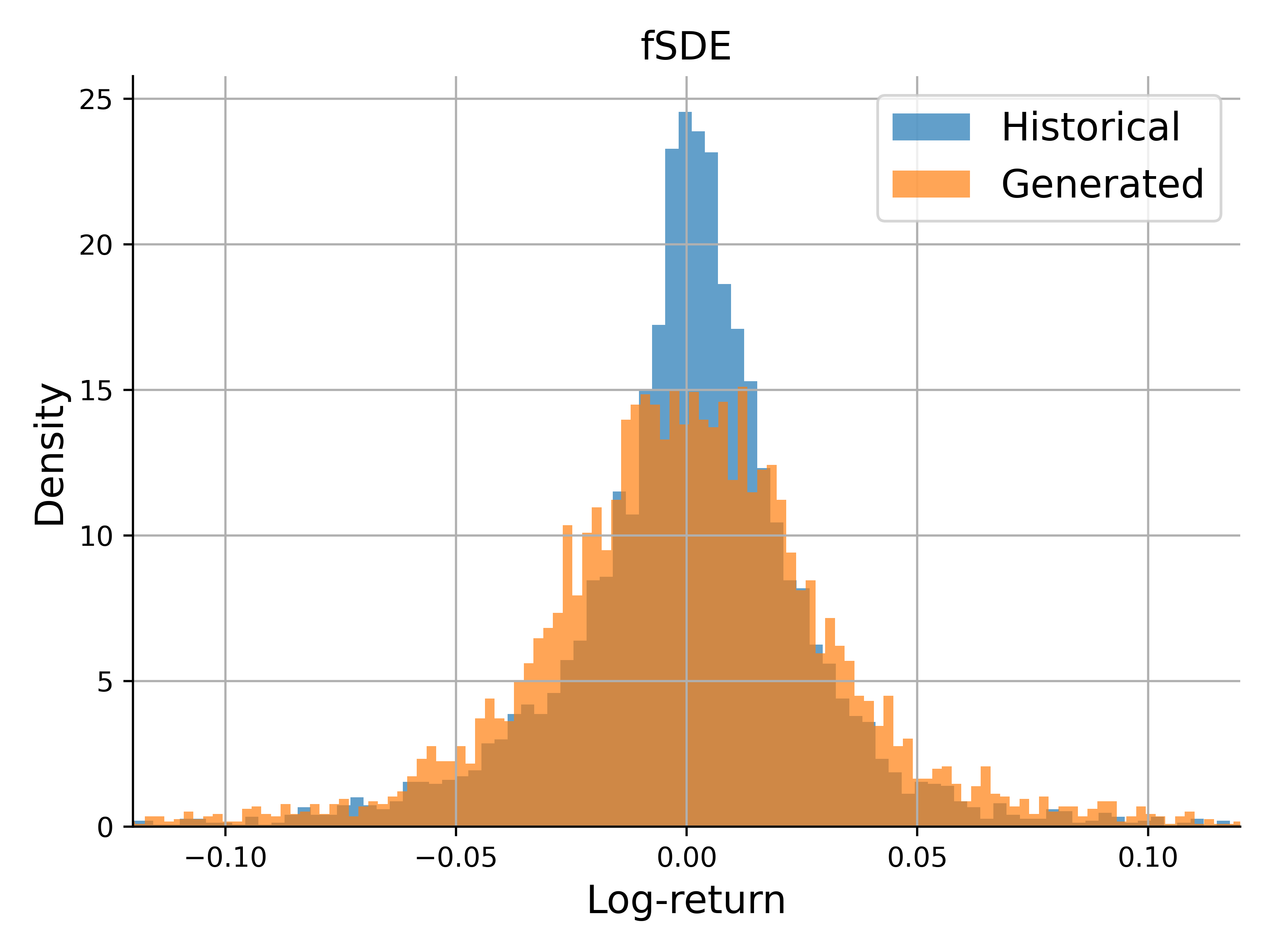}
  \end{minipage}
  \begin{minipage}[b]{0.5\linewidth}
    \centering
    \includegraphics[keepaspectratio, scale=0.42]{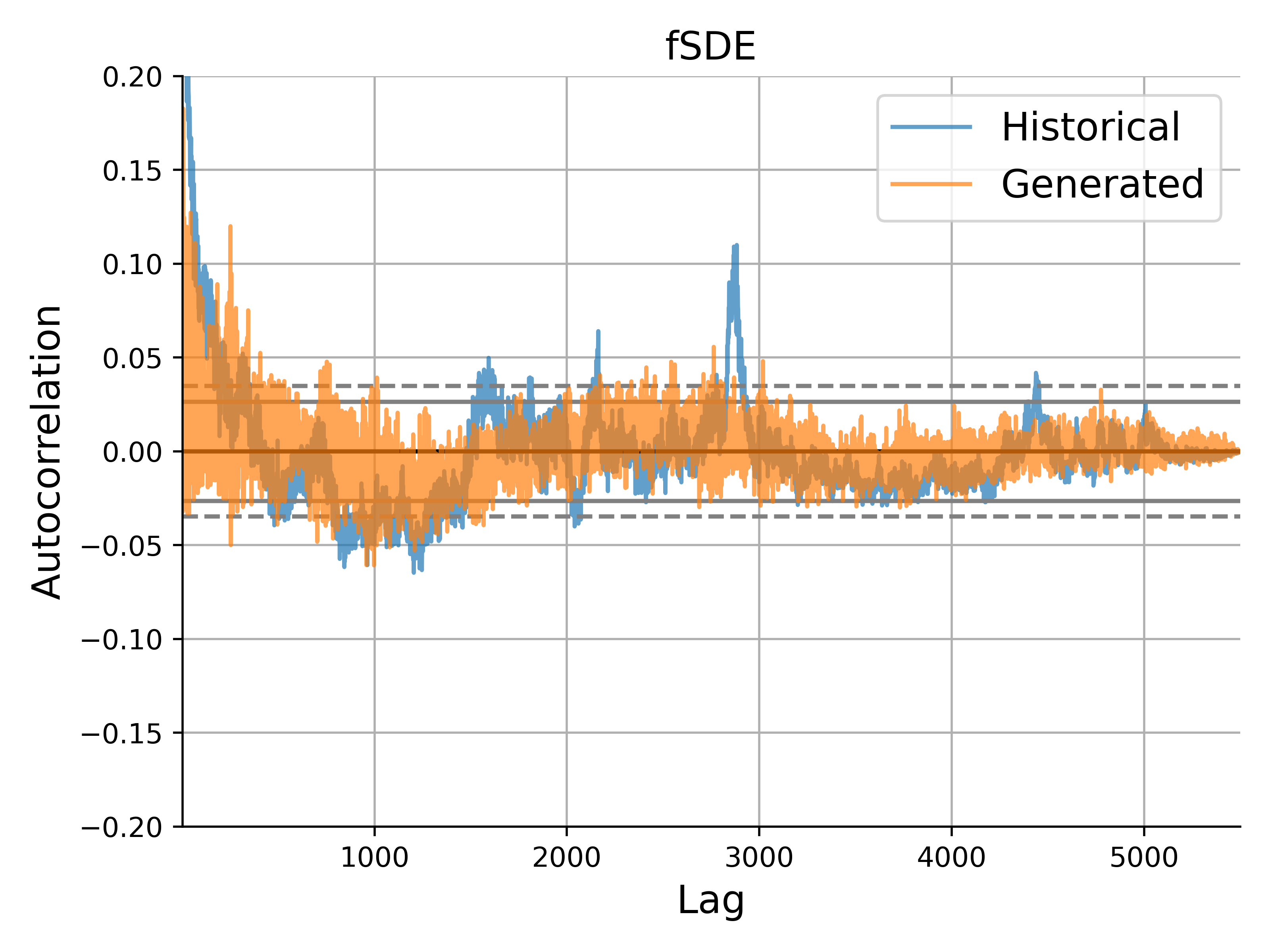}
\end{minipage}
\caption{Comparison in synthetic and original S\&P 500 index as to histogram (left column) and correlogram (right column) of log return process. Synthetic time series are generated after calibration by RNN (upper row), SDE-Net (middle row) and fSDE-Net (lower row).}
\label{fig:hist_acf}
\end{figure*}

\subsection{Performance Metrics}
In addition to check whether generated processes have long-range dependent increments by measuring the Hurst index, we use the following criterion to compare the generated paths with the original one: marginal distribution, dependency, and predictive score.

Since our main purpose is to rebuild the LRD property for synthetic paths, which are generated by the fSDE-Net, the most important performance characteristic is the Hurst index $H$.

\subsubsection{Hurst index}
Estimate Hurst index $H$ by Mandelbrot's method. 
More specifically, recalling $R/S$ statistics satisfy the relation \eqref{eq:RS} for a large time, the Hurst index is estimated via linear regression after eliminating the first $100$ data. 
By comparing the estimated value of Hurst index $H$, we check whether $H > 1/2$ holds true so that the generated time series exhibits the long-range dependency. 

\subsubsection{Marginal distribution}
Compare empirical distributions of historical and generated data by the following manner. 
Let $\rho$ and $\hat{\rho}$ be empirical probability density functions (epdf) of historical and generated paths, respectively. 
Then for a set of bins $\mathcal{B} = \{B_1, \ldots, B_K \} $, we compute  
\[
\frac{1}{2} \sum_{B \in \mathcal{B}} |\rho(B) - \hat{\rho}(B)| ,
\]
as a metric between historical and generated distributions. 
Note that this metric takes values in $[0,1]$ according to the factor $1/2$.  
Here $\rho (B) = |B|^{-1} \int_{B} \rho(x)dx $, which is similarly defined for $\hat{\rho}$, for each $B \in \mathcal{B}$.

\subsubsection{Dependency}
We write $r_{0:T}=(r_0, \ldots, r_T)$. Let $\gamma(\tau) = \mathrm{Corr}(|r_t|, |r_{t+\tau}|)$ denote correlation coefficient of the absolute log-difference series with lag $\tau$ and let $C : \mathbb{R}^T \to [-1, 1]^S: r_{0:T} \mapsto (\gamma(1), \ldots, \gamma(S))$ be the auto-correlation function up to lag $S \le T-1$. 
Then, following \cite{wiese2020quant} we use the ACF score defined by 
\[
\mathrm{ACF} = \bigg\| C(r_{0:T}) - \frac{1}{M} \sum_{1\le i \le M} C(\hat{r}_{0:T}^{(i)}) \bigg\|_2 
\]
where $\| \cdot \|_2$ denotes the Euclidean norm.
In addition, we consider weighted version of the ACF score to test whether auto-correlation decays with heavy tail for large time lags. 
\[
\mathrm{ACF}_w = \bigg\| C(r_{0:T})\circ w 
- \frac{1}{M} \sum_{1\le i\le M} C(\hat{r}^{(i)}_{0:T})\circ w \bigg\|_2 
\]
where $\circ$ denotes the Hadamard product and we use $w = (2i/(S+1))_{i=1,\ldots, S}$ as a weight vector with unit mean. 

\subsubsection{Predictive score}
Finally, to evaluate whether prediction of forthcoming time evolution, we compare the coefficient of determination $R^2$ computed for testing data sets.  
For calculation of the $R^2$ score, we use the first $80\%$ of data for training, while the other data is used for testing.


\subsection{Results}
Table \ref{tab:result} shows results for the above performance metrics after 100 iteration steps where Adam with learning rate 0.04 is used as the optimizer. 
Looking Hurst indices in the table, values for the fSDE-Net generator is higher than those for the other methods for types of data. 
From this perspective, the fSDE-Net outperforms the other generative methods reproducing LRP of the original path. 
Moreover, for the fractional OU process and NhemiTemp, the fSDE-Net generates better marginal distribution. 
On the other hand, in view of values of performance metrics on marginal distribution and ACF, it looks there is no advantage in fSDE-Net comparing to the other methods. (See Figure \ref{fig:hist_acf} that displays histogram and correlogram of a sample path of log-difference sequence generated by RNN, SDE-Net and fSDE-Net.) 
Finally, all $R^2$ scores take negative values, which indicates difficulty of modeling and making prediction on real-world time series. 
However, the $R^2$ score for the fSDE-Net tends take values not so small compared to the other methods. 
This roughly indicates that the fSDE-Net generates synthetic paths that are macroscopically close to the original one.

\section{Conclusion}\label{sec_con}
The main purpose of this paper is generation of time series data, from input series which are often irregularly sampled, using neural differential equation model, which can be viewed as a deep neural network with infinitely many layers.
To broaden the scope of application of the above continuous model, we focused on the case where the noise structure is complicated, varying from usual iid models. 
In particular, we studied heredity of long-term memory property of input time series, which is observed in wide variety of data, when generating data with the help of neural differentials. 
As a generative model which generates artificial paths duplicating the long-term memory property of an input path, we proposed the fSDE-Net, a neural differential equation model using fractional Brownian motion with Hurst index larger than half. 
Moreover, we constructed numerical solver of fractional SDE via the explicit Euler scheme and applied the fSDE-Net to generation of irregularly-sampled time series with long-range dependency. 
According to a theoretical analysis, one should postulate good conditions, regularity and at most linear growth, on driving functions of neural differential equations, which gives a standard way to choose activation function. 
On the other hand, numerical simulation revealed that the fSDE-Net has more ability to generate long-range dependent paths than existing RNN and SDE generators. 
In summary, we revealed that the fSDE-Net we introduced in this paper is a generative model which is applicable for irregularly-sampled time series and inherit the long-range dependency. 

\subsection*{Future Works}
In this paper, since implementation of the fSDE-Net we are proposing is a central role at the first stage, we consider only a simple situation to check the possibility to construct the generator.
Consequently, there is room for further optimizing the network architecture rather than the current MLP with a few layers, for instance, to improve the experimental result. 
As a further development of the research, to investigate the generation of time series with a different characteristic is a possible direction. 
For instance, extension to the case when Hurst index is less than half is important to take roughness of time series into account, albeit there are some difficulties to construct solution theory of fSDE with rough noise and its numerical solver. 
Alternatively, generative latent model is also important version of the present method, since it enables us to reduce dimensions of input data.  

\subsection*{Broader Impact}
An application of generative models specialized in financial time series as the fSDE-Net we proposed in this paper is to evaluate option trading strategies. In this direction, \cite{wiese2019deep} demonstrates GAN-based DNNs generate more realistic time series outperforming the classical ones. Since fBm-driven models are motivated by real data, there is a possibility that our generative method is successfully applied to evaluating or constructing hedge strategies. 
Not restricted to the application to financial market, our generative method broadened a comprehensive way to reproduce complex time series, which can be applied in such a way as model construction, forecasting and anomalous detection, etc. in various areas of the real world.


\bibliography{reference}

\end{document}